\documentclass{article}

\usepackage[main, final]{neurips_2026}
\usepackage{microtype}
\usepackage{graphicx}
\usepackage{subcaption}
\usepackage{booktabs} 
\usepackage{pifont}      
\usepackage{xcolor}      
\usepackage{colortbl}    
\usepackage{booktabs}    

\usepackage{hyperref}

\usepackage{amssymb}
\usepackage{mathtools}
\usepackage{amsthm}
\usepackage{booktabs}
\usepackage{multirow}
\usepackage{rotating} 
\usepackage{inconsolata}
\usepackage{amsmath}
\usepackage[utf8]{inputenc}
\usepackage{amsmath,amssymb}
\usepackage{hyperref}

\theoremstyle{plain}
\newtheorem{theorem}{Theorem}[section]
\newtheorem{proposition}[theorem]{Proposition}

\theoremstyle{definition}
\newtheorem{definition}[theorem]{Definition}

\theoremstyle{remark}

\definecolor{MyPink}{HTML}{FADBDF}
\definecolor{MyBlue}{HTML}{DDEDFD}
\definecolor{colorBigBang}{HTML}{FFF9C4} 
\definecolor{colorFriends}{HTML}{E1BEE7} 
\definecolor{colorOffice}{HTML}{E0F7FA}  
\definecolor{colorOurs}{HTML}{FFF3E0}    
\definecolor{codegray}{RGB}{240,240,240}

\title{Proactive Memory for Ad-Hoc Recall over \\Streaming Dialogues}

\author{%
Bingbing Wang \\
Department of Computing \\
The Hong Kong Polytechnic University\\
The School of Computer Science and Technology\\
Harbin Institute of Technology (Shenzhen)\\
\And
Jing Li\\
Department of Computing \\
The Hong Kong Polytechnic University\\
\And
Ruifeng Xu \\
The School of Computer Science and Technology\\
Harbin Institute of Technology (Shenzhen)
}

\begin{document}

\maketitle

\begin{abstract}
Real-world dialogue usually unfolds as an infinite stream. 
It thus requires bounded-state memory mechanisms to operate within an infinite horizon. 
However, existing read-then-think memory is fundamentally misaligned with this setting, as it cannot support ad-hoc memory recall while streams unfold.
To explore this challenge, we introduce \textbf{STEM-Bench}, the first benchmark for \textbf{ST}reaming \textbf{E}valuation of 
\textbf{M}emory.
It comprises over 14K QA pairs in dialogue streams that assess perception fidelity, temporal reasoning, and global awareness under infinite-horizon constraints.
The preliminary analysis on STEM-Bench indicates a critical \textit{fidelity-efficiency dilemma}: retrieval-based methods use fragment context, while full-context models incur unbounded latency. 
To resolve this, we propose \textbf{ProStream}, a proactive memory framework for streaming dialogues built on a hierarchical structure.
It enables ad-hoc memory recall on demand by reasoning over continuous streams with multi-granular distillation. 
Moreover, it employs Adaptive Spatiotemporal Optimization to dynamically optimize retention based on expected utility. 
It enables a bounded knowledge state for lower inference latency without sacrificing reasoning fidelity.
Experiments show ProStream delivers higher reasoning fidelity than prior baselines while maintaining substantially lower latency than full-context alternatives.
\end{abstract}

\section{Introduction}
\label{Introduction}
Large language models (LLMs) have largely advanced dialogue systems in various applications, e.g., customer service \citep{1-li2025performance} and personalized education \citep{2-kasneci2023chatgpt}. 
They usually adopt a long-term memory mechanism that enables systems to maintain continuity and reason across longitudinal sessions \citep{3-zhang2024survey,4-packer2023memgpt}. 
It has drawn significant attention, with many studies on memory retention and contextual reasoning  \citep{maharana2024locomo,wu2024longmemeval,pakhomov2025convomem}.

Despite this progress, most memory mechanisms continue to operate under a \textit{read-then-think} paradigm, which presupposes a static, fully accessible context. 
This paradigm is fundamentally misaligned with the practical system for \textit{streaming} dialogues, where \textit{ad-hoc memory recall} may occur anytime as the dialogue unfolds. 
As shown in Figure \ref{fig:1} (Left), full-context attention incurs prohibitive computational costs and unbounded latency growth; moreover, noise from unorganized contexts can lead to reasoning failures. Consequently, we explore the \textit{streaming memory paradigm} illustrated in Figure \ref{fig:1} (Right). It requires memory to operate as a bounded state within an infinite horizon, reflecting the reality that dialogue unfolds as an infinite stream.

\begin{figure}[!t]
  \centering
  \includegraphics[width=0.9\linewidth]{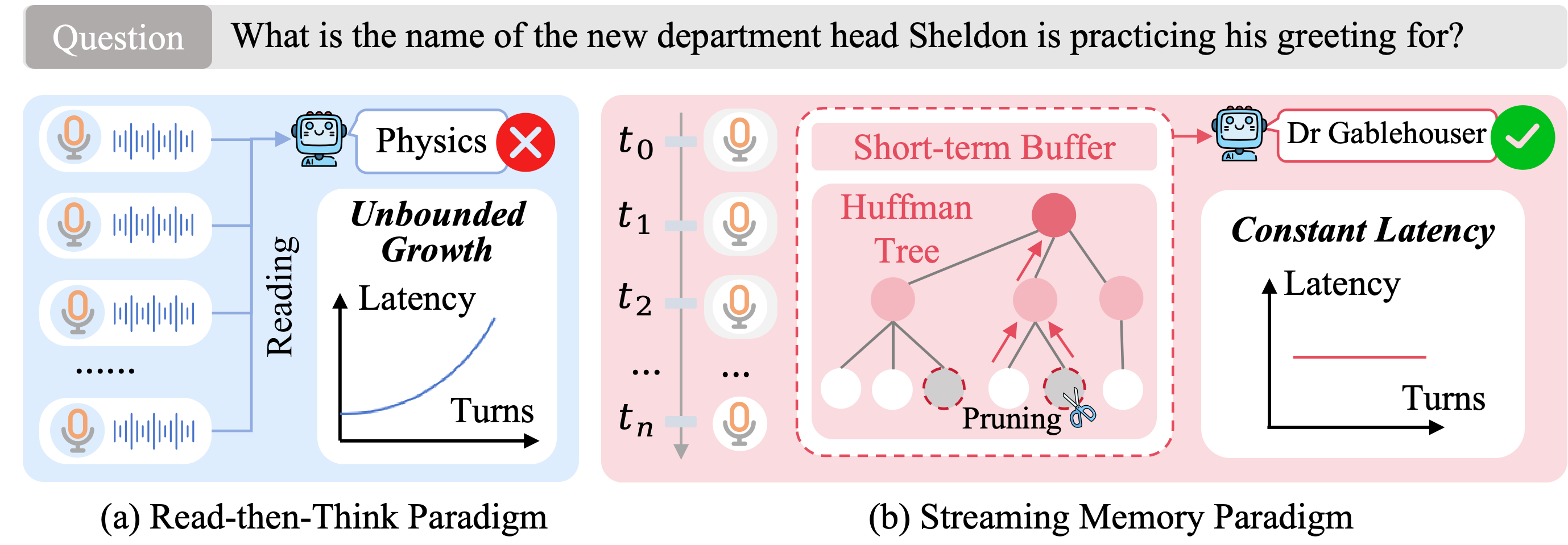}
  \caption{Comparison of the \textit{read-then-think} paradigm (left) and the \textit{streaming memory paradigm} (right) based on The Big Bang Theory (TBBT) dialogues.}
  \vspace{-0.5em}
  \label{fig:1}
\end{figure}

To the best of our knowledge, our study is the first to \emph{explore ad-hoc memory recall in streaming dialogues, which defines the ability to retrieve specific historical context on-demand to satisfy immediate reasoning needs.}
To benchmark this pilot study, we introduce \textbf{STEM-Bench}, the first benchmark for \textbf{ST}reaming \textbf{E}valuation of \textbf{M}emory. Based on LongDialQA \citep{kim2024dialsim}, we convert textual dialogues into synthesized audio to simulate real-world streaming dialogue scenarios. 
STEM-Bench comprises 14k QA pairs to probe three core memory capabilities for ad-hoc recall: high-fidelity perception, structural logical reasoning, and dynamic global awareness. 
Our preliminary analysis of STEM-Bench uncovers a critical \textit{fidelity-efficiency dilemma} challenging current methods. 
Standard memory-based on retrieval approaches fragment context, leading to reasoning degradation. 
Conversely, models attempting to maintain full context suffer from unbounded latency. These findings underscore that high performance in infinite streams cannot be achieved by \textit{reactive} context scanning; it necessitates a paradigm shift toward \textit{proactive} memory organization.

To resolve this, we propose \textbf{ProStream}, a proactive memory framework for streaming dialogues with a hierarchical organization.
As depicted in Figure \ref{fig:1} (Right), ProStream transforms memory into an active, bounded state machine. By synergizing proactive semantic buffering with hierarchical, multi-granular distillation, it facilitates ad-hoc recall whenever demanded by reasoning. 
Moreover, it employs Adaptive Spatiotemporal Optimization to dynamically optimize information retention based on expected utility.
Consequently, it maintains a bounded knowledge state for lower inference latency without sacrificing the reasoning fidelity.
To evaluate ProStream, we first conduct a comprehensive comparison on STEM-Bench against non-trivial baseline memory mechanisms. 
The results demonstrate that ProStream achieves state-of-the-art performance in both accuracy and efficiency, with ablation studies confirming that all modules contribute positively. 
Furthermore, our quantitative analysis indicates that ProStream scales consistently well with LLMs of varying sizes, though it may render higher complexity in certain contexts. 
Finally, we provide a qualitative analysis to offer insights into the model's underlying strengths and remaining limitations.

In summary, our main contributions are as follows:
1) We present the pilot study of ad-hoc memory recall in dialogue streams with \textbf{STEM-Bench}, the first benchmark to evaluate perception, reasoning, and global awareness under strict infinite-horizon constraints.
2) We propose \textbf{ProStream}\footnote{Code is available at https://anonymous.4open.science/r/ProStream-8827}, a novel framework implementing proactive hierarchical memory for streaming dialogues. Leveraging adaptive spatiotemporal optimization to prioritize information with expected utility, it constructs a bounded knowledge topology for ad-hoc, reasoning-demanding recall in continuous streams.
3) Extensive evaluations show ProStream outperforms state-of-the-art baselines in accuracy and efficiency by large margins. Crucially, it replaces the quadratic complexity of existing methods with much lower inference latency, enabling scalable real-time deployment.

\section{Related Work}

\paragraph{Dialogue Benchmarks for Long-Term Memory.}
STEM-Bench falls within this category. This line of research reflects a progressive shift from traditional single-turn modeling \citep{budzianowski2018multiwoz, wei2018airdialogue} toward long-term dialogue understanding.
Early efforts evaluated personalized response generation from extended chat histories \citep{xu2022beyond,xu2022long}. To quantify memory accuracy, subsequent benchmarks adopted the Question Answering (QA) paradigm. Representative studies include {MemoryBank} \citep{zhong2024memorybank} (multi-day histories with 194 probing questions), {LoCoMo} \citep{maharana2024locomo} (single-hop to adversarial reasoning over 50 long-term dialogues), and {PerLTQA} \citep{du2024perltqa} (over 8k questions spanning world knowledge, social relationships, and events). Recently, {LongDialQA} \citep{kim2024dialsim} introduced response latency constraints to penalize slow systems, while {LongMemEval} \citep{wu2024longmemeval} and {ConvoMem} \citep{pakhomov2025convomem} evaluated reasoning, knowledge updates, and abstention capabilities across tens of thousands of QA pairs.
Despite this, existing benchmarks adopt a static, \textit{read-and-think} paradigm diverging from real-world dialogue streams with dynamic, ever-growing contexts. In contrast, STEM-Bench explores \textit{streaming} memory to facilitate ad-hoc recall whenever reasoning demands it within dialogue streams.


\vspace{0.2em}
\noindent\textbf{Methods of Long-Term Memory.}
ProStream aligns with recent efforts on equipping agents with long-term memory. Early approaches expand the context window to process extensive histories directly \citep{fu2024data}, but suffer from prohibitive computation and the ``lost-in-the-middle'' issue \citep{liu2024lost}. Other methods introduce differentiable memory modules \citep{wang2023augmenting} or treat memory as context compression, representing histories as internal states \citep{chevalier2023adapting}, discrete tokens \citep{xu2023recomp}, or retrievable segments via Retrieval-Augmented Generation (RAG) \citep{jimenez2024hipporag}. 
Recent work explores more structured memory mechanisms. {LiCoMemory} \citep{huang2025licomemory} and {SGMem} \citep{wu2025sgmem} model hierarchical or graph-based associations, while {RMM} \citep{tan2025prospect} and {MemGAS} \citep{xu2025towards} employ adaptive retrieval and online reinforcement learning for memory refinement. Event-centric approaches \citep{zhou2025simple} further ground history in structured semantic representations. However, these methods remain largely retrieval-centric and batch-oriented, making them less suitable for streaming scenarios requiring ad-hoc memory recall. In contrast, ProStream formulates memory as online hierarchical state maintenance with proactive spatiotemporal compression, enabling efficient long-term retention under low-latency constraints.

\section{Our STEM-Bench}
\subsection{Problem Formulation}
STEM-Bench formulates dialogue evaluation as an incremental state evolution problem over an infinite input stream. Let $\mathcal{S}=\{I_1,I_2,\dots\}$ denote an unbounded input sequence, where each $I_t$ is either an interaction unit $U_t=\{s_t,l_t\}$ consisting of speaker identity $s_t$ and audio content $l_t$, or a question probe $P_t=\{q_t,y_t\}$ with query $q_t$ issued at time $t$ and the corresponding ground-truth answer $y_t$. 
Under streaming constraints, the model maintains a bounded memory state $\mathcal{M}_t$, updated online from $\mathcal{M}_{t-1}$ and $U_t$. Upon receiving a probe, the model generates an answer $a_t$ based on $\mathcal{M}_t$ and $q_t$, with the goal of maximizing alignment between $a_t$ and $y_t$.


\subsection{STEM-Bench Construction}
Grounded in cognitive and computational theories, STEM-Bench evaluates three core capabilities:
\textbf{(1) High-Fidelity Perception (HFP)} measures atomic detail retention under noise, targeting the \textit{Lost-in-the-Middle} effect \citep{liu-etal-2024-lost} and compression-induced hallucinations through \textbf{Single-hop} and \textbf{Adversarial} tasks.
\textbf{(2) Structural Logical Reasoning (SLR)} assesses the ability to connect fragmented events across timestamps via \textbf{Multi-hop} and \textbf{Comparative} tasks, emphasizing hierarchical evidence composition over shallow retrieval.
\textbf{(3) Dynamic Global Awareness (DGA)} evaluates online maintenance of evolving statistical and temporal states using \textbf{Aggregative} and \textbf{Temporal} tasks, requiring continual state updates without backtracking.

\begin{figure*}[!t]
  \centering
  \includegraphics[width=\linewidth]{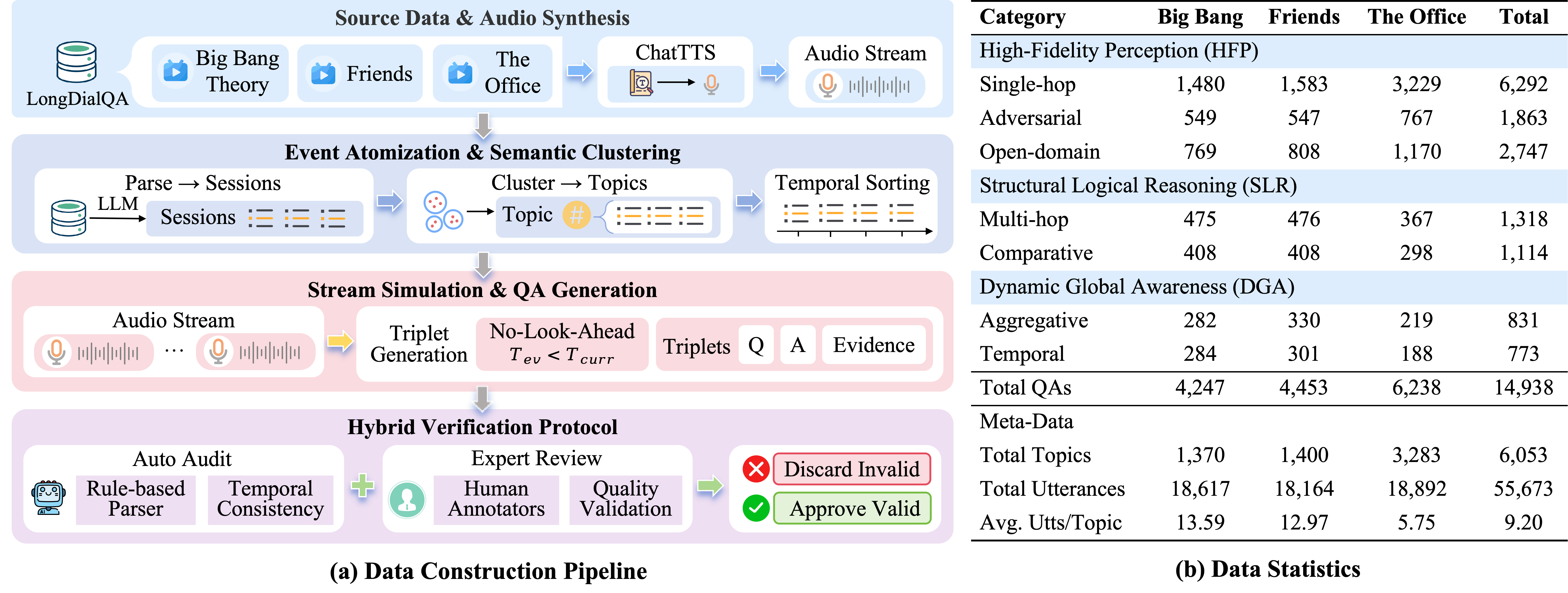}
  \caption{Overview of STEM-Bench Benchmark Curation. (a) Data construction pipeline of the STEM-Bench dataset. (b) Statistics of the STEM-Bench dataset with samples examining HFP, SLR, and DGA. TBBT: The Big Bang Theory.}
  \label{fig:2}
\end{figure*}

\subsection{Data Construction Pipeline}
We implemented a pipeline to transform textual data into a streaming benchmark with audio dialogues, as depicted in Figure \ref{fig:2} (a). The process begins with audio synthesis, converting LongDialQA \citep{kim2024dialsim} into realistic audio via ChatTTS\footnote{https://github.com/2noise/ChatTTS}, followed by semantic clustering to enforce strict causal linearity across sessions.
Subsequently, QA generation phase produces constraint-aware triplets (Question, Answer, Evidence) under a strict \textit{No-Look-Ahead Constraint}, ensuring that for any trigger turn $T_{curr}$, all supporting evidence $T_{ev}$ pertains exclusively to the past $T_{ev}<T_{curr}$.
Finally, data integrity is guaranteed through a hybrid verification protocol.
Following automated rule-based auditing, we recruited three domain experts to validate logical consistency on a random subsample. This review yielded an Inter-Annotator Agreement (Cohen's $\kappa=0.74$) \citep{cohen1960coefficient}, showing good agreements and reliable annotations.


\subsection{Dataset Statistics}
Figure \ref{fig:2} (b) presents statistics for STEM-Bench, comprising 14,938 QA pairs grounded in 55,673 utterances and 6,053 semantic topics across three domains. The distribution is meticulously stratified to probe diverse memory faculties. High-Fidelity Perception forms the foundation with over 6,200 Single-hop and 2,747 Open-domain queries, crucially augmented by 1,863 Adversarial instances designed to expose compression-induced hallucinations. To challenge cognitive depth beyond simple retrieval, Structural Logical Reasoning incorporates 2,432 Multi-hop and Comparative tasks necessitating hierarchical synthesis, while Dynamic Global Awareness forces models to overcome stateless fragility through over 1,600 Aggregative and Temporal instances that demand genuine online state evolution for frequency tracking and chronological sequencing.


\subsection{Evaluation Metric} 
We adopt a holistic evaluation protocol comprising performance and efficiency metrics. 
We utilize BLEU-4 (\textbf{B-4}), ROUGE-L (\textbf{R-L}), and BERTScore (\textbf{B-S}) for lexical and semantic alignment, while Key Entity Matching (\textbf{KEM}) and Evidence Similarity (\textbf{Evid.}) quantify strictly factual precision. Furthermore, we also incorporate Gemini-2.5-Pro (\textbf{Gem}) as an expert judge to evaluate answer correctness and coherence. Efficiency is measured by average inference latency (\textbf{Time} in seconds) to verify real-time feasibility.

\section{Preliminary Analysis}
To identify principled design choices for streaming architectures, we contrast two representative paradigms on our long-context benchmarks: a \textbf{Full-Context Oracle}, which accesses the complete dialogue history, and a \textbf{Standard RAG} baseline utilizing vector-similarity retrieval. 
As shown in Figure\ref{fig:3} (Right), the Full-Context Oracle consistently maintains higher reasoning fidelity, with its average performance metric (dashed purple line) significantly surpassing that of RAG (dashed blue line). The dense distribution of low-accuracy points for RAG, regardless of evidence distance, suggests that retrieval may fragment contexts, making it difficult to capture the global dependencies required for complete reasoning chains.
Conversely, Figure~\ref{fig:3} (Left) exposes the critical \textit{fidelity-efficiency dilemma}. While the Full-Context approach offers superior accuracy, its latency grows unboundedly and exhibits high variance as dialogue length increases, rendering it intractable for real-time interaction. In contrast, RAG maintains low, stable latency but at the cost of substantial context degradation. This tension necessitates a paradigm shift: \textit{an optimal streaming memory must approximate the Oracle's reasoning precision within a strictly bounded computational budget}. Motivated by this objective, we introduce \textbf{ProStream}, a framework specifically engineered to resolve this dilemma through proactive state maintenance, as detailed in the subsequent section.


\begin{figure}[!t]
  \centering
  \includegraphics[width=\linewidth]{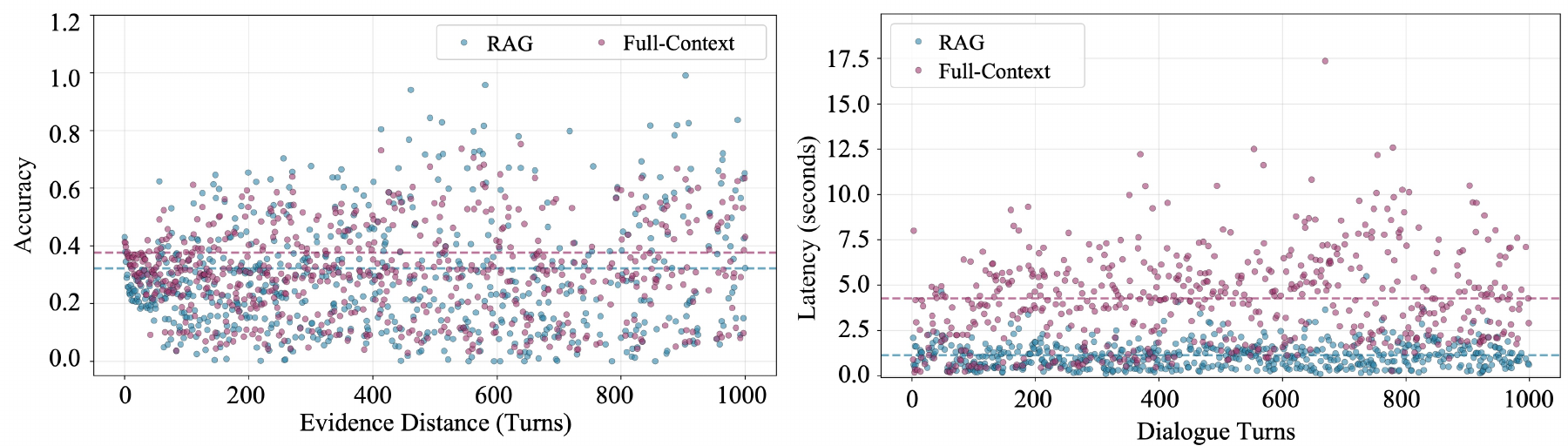}
  \caption{Preliminary analysis of RAG and full-context performance. (Right) The average accuracy of all performance metrics across evidence distances. (Left) Inference latency over dialogue turns. The dashed lines indicate the overall average results.}
  \label{fig:3}
\vspace{-0.6em}
\end{figure}

\begin{figure*}[!t]
  \centering
  \includegraphics[width=0.95\linewidth]{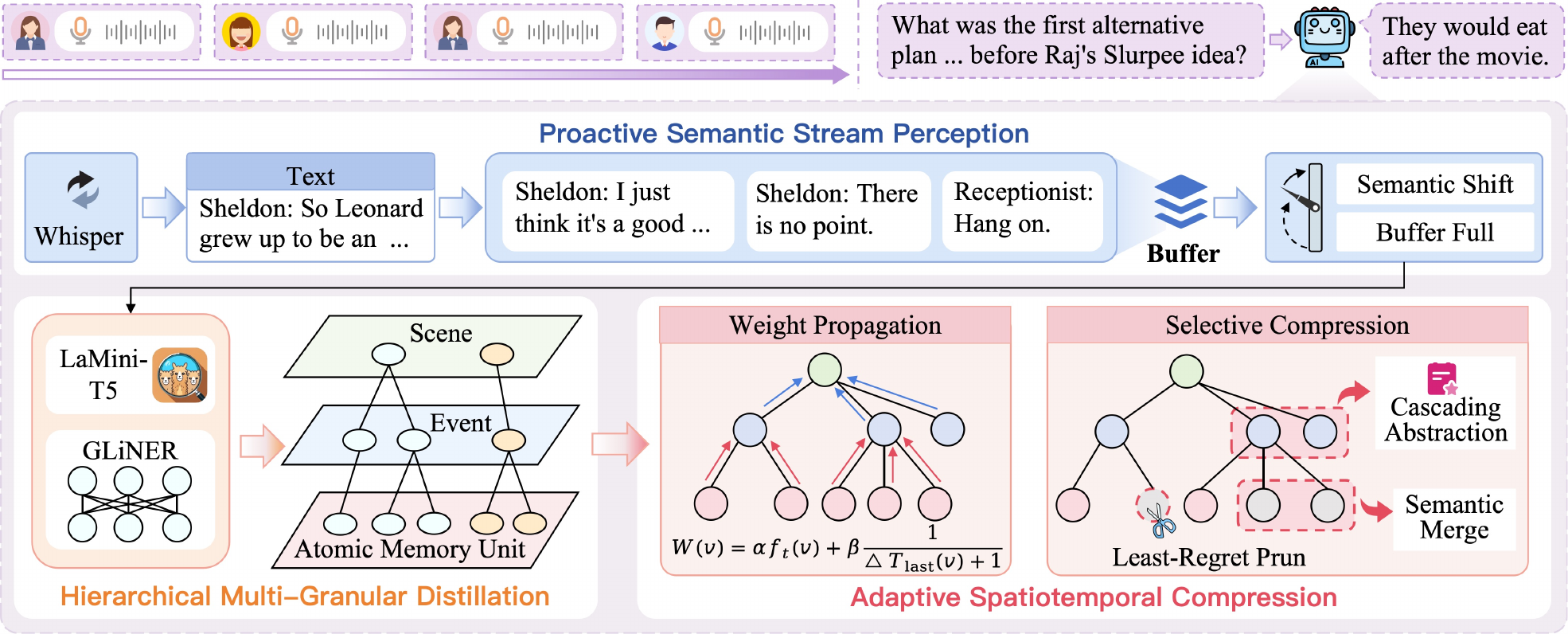}
  \caption{Overview of our ProStream with four components discussed in turn from $\S$\ref{subsec:buffering} to $\S$\ref{subsec:reasoning}.}
  \label{fig:4}
  \vspace{-1em}
\end{figure*}

\section{The ProStream Framework}
As illustrated in Figure~\ref{fig:4}, \textbf{ProStream} reformulates memory maintenance as a bounded state evolution process, transforming infinite input streams into a finite knowledge topology. This pipeline operates in four stages:
\textit{Proactive Semantic Stream Perception} segments continuous raw signals into discrete semantic blocks;
\textit{Hierarchical Multi-Granular Distillation} structures these blocks into a multi-layered tree topology;
\textit{Adaptive Spatiotemporal Optimization} dynamically maximizes information density under strict capacity constraints;
\textit{Probabilistic Evidence-Grounded Reasoning} synthesizes these memory states with immediate contexts for accurate response generation.

\subsection{Proactive Semantic Stream Perception}
\label{subsec:buffering}
Updating global memory $\mathcal{M}$ at every single turn incurs unnecessary computational overhead and risks fragmenting coherent contexts. To address this, we introduce the Short-Term Sensing Buffer (STSB) $\mathcal{B}$. 
Functionally, $\mathcal{B}$ accumulates an incoming stream of interaction units $U_t=\{s_t,l_t\}$. To capture both linguistic content and speaker dynamics, we first transcribe the audio component $l_t$ via Whisper ASR $f_{\text{ASR}}$~\cite{pmlr-v202-radford23a} and concatenate it with the speaker identity $s_t$ to form a composite textual unit $x_t=[s_t;f_{\text{ASR}}(l_t)]$.
We formulate stream segmentation as an online boundary detection problem. Specifically, for each incoming utterance $x_t$, we compute its dense representation $\mathbf{v}_t = f_\text{Enc}(x_t)$ with an encoder $f_\text{Enc}(\cdot)$ and monitor semantic continuity via cosine similarity $\psi_t = \cos(\mathbf{v}_t, \mathbf{v}_{\text{prev}})$ with the preceding unit's embedding.
Once this local coherence drops below a drift threshold $\tau_{\text{drift}}$ or $\mathcal{B}$ reaches capacity, the accumulated sequence is consolidated into a semantic block $\mathcal{T}_{\text{block}}$, ready for the following hierarchical distillation ($\S$\ref{subsec:encoding}).
Here, we further retain a leading context window of size $W_{\text{STSB}}$ from $\mathcal{B}$ for the subsequent cycle, thereby preserving boundary-spanning dependencies during the transition.

\subsection{Hierarchical Multi-Granular Distillation}
\label{subsec:encoding}
Recall the analysis results from Figure \ref{fig:3} implying the necessity for organized memory. 
Inspired by that, the unstructured semantic block $\mathcal{T}_{\text{block}}$ is distilled into a hierarchical tree structure $\mathcal{H}$.
This topology relies on three semantic layers: 1) \textbf{Scene} $c$ for coarse-grained thematic clustering; 2) \textbf{Event} $e$ for temporal context segmentation; and 3) \textbf{Atomic Memory Unit} $o$ for fine-grained factual retention. 

Specifically, the distillation process transforms the input semantic block $\mathcal{T}_{\text{block}}$ into this structure through a dual-path mechanism.
On the generative front, we employ a unified instruction-tuned model $f_{\theta}(\cdot,\cdot)$ \citep{wu2024lamini} to perform recursive summarization. Specifically, conditioned on a summarization instruction $\rho_{\text{evt}}$,  the model first distills the block into a textual event summary $e=f_{\theta}(\mathcal{T}_{\text{block}},\rho_{\text{evt}})$. This summary is subsequently abstracted into a higher-level scene category $c=f_\theta(e,\rho_{\text{scn}})$ using a classification prompt $\rho_{\text{scn}}$.
In parallel, to populate the atomic layer, we utilize GLiNER \citep{zaratiana2024gliner} to parse the block into relational triplets $\epsilon \in \{\mathcal{E}_{\text{sub}},\mathcal{R}, \mathcal{E}_{\text{obj}}\}$ for a subject and an object. To further map these triplets into the tree structure, we treat the discrete subject and object entities as candidate Atomic Memory Unit (AMU) nodes $o$. 
Each candidate $o$ is encoded via $f_{\text{Enc}}(\cdot)$ into $\mathbf{v}_o$. 
$o$ is attached as a child node under the corresponding Event $e_j$ only if it satisfies a strict novelty check against the existing atomic node set $\mathcal{O}_{\text{amu}}$: $\max_{o'\in\mathcal{H}} \cos(\mathbf{v}_o,\mathbf{v}_o')<\theta_{\text{concept}}$ for lightweight structure. Finally, the relation $\mathcal{R}$ is stored as a contextual attribute of these nodes, preserving the semantic dependencies of the original triplet to reconstruct evidence.

\subsection{Adaptive Spatiotemporal Optimization}
\label{subsec:optimization}
Streaming dialogues present a challenge of Online Budgeted Learning. As utterances arrive, the memory topology $\mathcal{H}$ must self-evolve to optimize efficiency for expected utility. Here, we maximize semantic retention under a strict capacity constraint $\mathcal{T}_{\max}$. We formalize this process as a \textit{Recall Probability Maximization} problem, aiming to minimize the expected regret of discarding retrieval-critical information.

\textbf{Probabilistic Utility Formulation.} Drawing on the Rational Analysis of Memory \citep{anderson2013adaptive}, we posit that the optimal memory $\mathcal{H}^*$ at time $t$ contains nodes with the highest posterior of future need. We model the ``need probability" $p(v\,|\,h_{<t})$ of a node $v$ on the memory tree ($\S$\ref{subsec:encoding}), spanning Scenes $c$, Events $e$, and AMUs $o$, given history $h_{<t}$ as a time-decaying Poisson process. We define the utility scalar $u_{v,t}$ as a surrogate for the log-odds of this probability:
\begin{equation}
    u_{v,t} = \underbrace{\alpha \cdot \log(f_{v,t} + 1)}_{\text{Frequency Prior}} + \underbrace{\beta \cdot \exp\left(-\frac{\Delta t_{v}}{\tau}\right)}_{\text{Temporal Drift}},
    \label{eq:utility}
\end{equation}
where $f_{v,t}$ denotes the cumulative access frequency, $\Delta t_v$ is the recency, $\tau$ is the characteristic decay scale, and $\alpha, \beta$ are hyperparameters balancing the two terms.

\textbf{Optimization Objective}.
Let $s_t \in\{0,1\}^{|\mathcal{H}_t|}$ be the state vector indicating retention at time $t$. The classic formulation assumes static values, but in our stream, the value of information decays. We define our objective as maximizing the cumulative discounted information gain over an infinite horizon subject to a dynamic capacity constraint:
\begin{equation}
    \begin{aligned}
    & \max_{\{\mathbf{s}_t\}_{t=1}^\infty} \sum_{t=1}^\infty \gamma^t \sum_{v \in \mathcal{H}_t} s_{v,t} \cdot u_{v,t}
    & \text{s.t.} \quad \sum_{v \in \mathcal{H}_t} s_{v,t} \cdot c_v \leq \mathcal{T}_{\text{max}}, \quad \forall t,
    \end{aligned}
    \label{eq:objective}
\end{equation}
where $\gamma\in (0,1)$ is a discount factor and $c_v$ is the token cost. This is an instance of the Online Knapsack Problem with Decaying Value, known to be NP-hard \citep{hartmanis1982computers}.

\paragraph{The ProStream Policy.}
We approximate the global optimum via a \textit{Greedy Marginal-Utility Policy} $\pi$. When the budget constraint is violated $\sum c_v > \mathcal{T}_{\text{max}}$, $\pi$ restores validity through a synergistic three-stage process. First, it performs \textit{Least-Regret Pruning} by iteratively discarding the node $v^*= \operatorname*{arg\,min}_{v \in \mathcal{H}_t} (u_{v,t}/c_v)$ with the minimal marginal utility density. This greedy heuristic theoretically offers a $(1 - 1/e)$-approximation guarantee assuming monotone submodularity of the information gain. Second, to compress the feature space without semantic loss, the policy applies \textit{Semantic Merging}, collapsing node pairs $\{v_i, v_j\}$ into a single centroid $\mathbf{v}_{\text{new}}$, whenever their latent distance $d(\mathbf{v}_i, \mathbf{v}_j) < \theta_{\text{sim}}$. Finally, to ensure topological consistency, \textit{Cascading Abstraction} recursively prunes any parent node if it loses all supporting leaf nodes, thereby preventing dangling abstractions and maintaining a coherent hierarchy.

\subsection{Probabilistic Evidence-Grounded Generation}
\label{subsec:reasoning}

To generate accurate responses, ProStream synthesizes a unified context
$
\mathcal{K} = \{ \mathcal{B}, \mathcal{M}_{\text{pend}}, \mathcal{M}_{\text{tree}} \},
$
where $\mathcal{B}$ is the STSB, $\mathcal{M}_{\text{pend}}$ acts as an intermediate buffer for distilled yet unformalized semantic representations, and $\mathcal{M}_{\text{tree}}$ comprises the top-$k$ semantic paths retrieved from the hierarchical $\mathcal{H}$. 

\textbf{Hierarchical Retrieval}. To construct $\mathcal{M}_{\text{tree}}$, we employ a top-down traversal strategy. 
We first encode the ad-hoc query q into a dense vector $\mathbf{v}_q=f_{\text{Enc}}(q)$.
Using this representation, the model recursively prunes irrelevant branches starting from the \textit{Scene} level, drilling down to \textit{Event} summaries and finally to \textit{AMUs}.
To ensure that retrieved evidence is both semantically relevant and structurally significant, we compute a composite relevance score for each candidate AMU $o$: $S(o) = \cos(\mathbf{v}_q, \mathbf{v}_o) \cdot u_{o,t}$, where $u_{o,t}$ is the time-variant utility weight defined in Eq.~\ref{eq:utility}. This mechanism effectively filters out low-utility noise to yield high-confidence evidence paths for memory recall.

\textbf{Generation}. 
Given the functional context $\mathcal{K}$, the model generates the final answer sequence $a$ by maximizing
$
P(a \mid q, \mathcal{K})
$,
optimized via the standard autoregressive objective:
\begin{equation}
    \mathcal{J}(\theta) = \sum_{j=1}^{|a|} \log P_{\theta}(y_j \mid y_{<j}, q, \mathcal{K}),
\end{equation}
where $y_j$ denotes the $j$-th output token of answer $a$.
This formulation enforces strict evidence grounding across the full memory spectrum, balancing fidelity and coherence.

\begin{table*}[t]
\centering
\caption{Main comparison results on our STEM-Bench. Best results are \textbf{bolded}, second best are \underline{underlined}. Metrics marked with $\uparrow$ indicate higher is better, while $\downarrow$ denotes lower is better. For ablations (bottom), w/o indicates without. }
\label{tab:main_results}
\tiny
\setlength{\tabcolsep}{0.8pt}
\renewcommand{\arraystretch}{1.0}
\resizebox{\textwidth}{!}{
\begin{tabular}{l  ccccccc  ccccccc  ccccccc}
\toprule
\multirow{2}{*}{\textbf{Method}} & 
\multicolumn{7}{c}{\cellcolor{colorBigBang}\textbf{The Big Bang Theory}} & 
\multicolumn{7}{c}{\cellcolor{colorFriends}\textbf{Friends}} & 
\multicolumn{7}{c}{\cellcolor{colorOffice}\textbf{The Office}} \\
\cmidrule(lr){2-8} \cmidrule(lr){9-15} \cmidrule(lr){16-22}
& Time$\downarrow$ & B-4$\uparrow$ & R-L$\uparrow$ & B-S$\uparrow$ & Evid.$\uparrow$ & KEM$\uparrow$ & Gem$\uparrow$ 
& Time$\downarrow$ & B-4$\uparrow$ & R-L$\uparrow$ & B-S$\uparrow$ & Evid.$\uparrow$ & KEM$\uparrow$ & Gem$\uparrow$ 
& Time$\downarrow$ & B-4$\uparrow$ & R-L$\uparrow$ & B-S$\uparrow$ & Evid.$\uparrow$ & KEM$\uparrow$ & Gem$\uparrow$ \\
\midrule
RAG   & 0.957	& 0.068 & 0.247 & 0.781	&0.355	&0.508	&0.331 
               & \underline{}{0.923} & \underline{0.061} & 0.231 & 0.768	&0.345	&0.502&	0.309
               & 0.918 & \underline{0.074} & \underline{0.271} & 0.778	&0.322	&0.524	&0.385 \\

RQ-RAG         & 5.149&0.051&0.201&0.770&0.275&0.460&0.329
               & 5.098&0.043&0.184&0.758&0.372&0.439&0.313
               & 1.058&0.072&0.270&\underline{0.780}&0.466&	0.535&0.376\\

RAPTOR         & \underline{0.519}	&0.041&	0.165	&0.761	&0.302	&0.375&	0.214
               & \underline{0.513}	&0.035	&0.157	&0.746	&0.396	&0.367	&0.214
               & \underline{0.443}	&0.032 & 0.142	&0.745	&0.324	&0.334	&0.201 \\

Full Context   &7.333	&0.065&	\underline{0.249}&	0.737	&0.250& \underline{0.710}	&\textbf{0.521}
               & 4.870	&0.057	&\underline{0.241}	&0.732&	0.360&	0.706	&\textbf{0.505}	
               &4.776	&0.066&	0.266	&0.742	&0.292	&\textbf{0.731}	&\textbf{0.574} \\

Summary   & 2.001&	0.049	&0.200&	0.783	&0.296&	0.407	&0.278
               &  2.352&	0.042	&0.180	&0.762	&0.334	&0.385	&0.246
               &1.641	&0.048	&0.201	&0.770	&0.365	&0.397&	0.272 \\

GraphRAG       & 0.997	&\underline{0.069}	&0.246&	0.788	&0.417	&0.479	&0.341 
               & 0.950	&0.055	&0.227	&\underline{0.775}	&0.460	&0.465	&0.309 
               & 1.231	&0.055	&0.223	&0.779	&0.420	&0.433	&0.311\\

HippoRAG2     & 4.231&	0.046&	0.176	&0.742&	0.481&	0.480	&0.263
               &6.539	&0.041	&0.165	&0.742	&\underline{0.529}	&0.479	&0.249
               &4.404&	0.040	&0.169	&0.740	&0.499	&0.458&	0.270\\

MemoRAG        & 7.498	&0.049&	0.193	&\underline{0.791}	&\textbf{0.551}&	0.366&	0.265 
               & 4.146	&0.043&	0.178	&0.771	&\textbf{0.575}	&0.355&	0.242
               & 5.922	&0.036	&0.171	&0.773	&\textbf{0.568}	&0.345	&0.225\\
A-Mem          & 1.301	& 0.057& 	0.204& 	0.755	& 0.370&	0.371	& 0.272 
               & 1.290	& 0.053	& 0.200	& 0.742	& 0.386	& 0.376	& 0.266 
               & 1.272 & 	0.059	& 0.220& 	0.753& 	0.353	& 0.377& 	0.295 \\
MemGAS         & 5.711	&0.013	&0.070	&0.528	&0.402&	\textbf{0.768}&	0.508 
               & 7.982	&0.012	&0.072	&0.530	&0.433	&\textbf{0.749}&	0.466 
               & 7.694	&0.006	&0.039&	0.592	&\underline{0.547}	&0.417&	0.285 \\

\midrule
\textbf{ProStream} & \textbf{0.408}	&\textbf{0.118}&	\textbf{0.390}&	\textbf{0.817}&	\underline{0.485}&	{0.612}&	\underline{0.509}
                         &\textbf{0.373}	&\textbf{0.099}	&\textbf{0.360}&	\textbf{0.801}	&0.491&	\underline{0.588}	&\underline{0.476}
                         & \textbf{0.316}&	\textbf{0.114}	&\textbf{0.408}	&\textbf{0.819}&	0.491	&\underline{0.626}	&\underline{0.534}\\

-w/o $\mathcal{B}$    & 0.409&	0.049&	0.182&	0.783	&0.071	&0.369  & 0.255
            & 0.469&	0.034&	0.149	&0.750	&0.276	&0.364  & 0.202 
            &  0.452&	0.030&	0.138	&0.758	&0.191	&0.336 & 0.193 \\
-w/o $\mathcal{M}_\text{tree}$    &0.400&	0.116&	0.382	&0.810	&0.463	&0.607& 0.500
            &0.318&	0.097&	0.304	&0.703	&0.480	&0.581 & 0.466
            & 0.319&	0.110&	0.402	&0.817	&0.484	&0.620 &0.526\\

-w/o $\mathcal{M}_\text{pend}$    & 0.401	&0.115	&0.380	&0.810	&0.475	&0.602 & 0.489
                & 0.320	&0.090	&0.356	&0.704	&0.482	&0.580 & 0.473
                & 0.304	&0.111	&0.401	&0.817	&0.487	&0.617 & 0.528\\

\bottomrule
\end{tabular}
}
\end{table*}

\section{Experiment}
\subsection{Experimental Setting}
\label{Experimental Setting}
\textbf{Implementation Details}. 
For \textit{Proactive Semantic Stream Perception}, we set drift threshold to 0.7, buffer capacity to 5, and $W_{\text{STSB}}$ to 10. In \textit{Hierarchical Multi-Granular Distillation}, node instantiation enforces maximal granularity, with similarity thresholds of 0.85. For \textit{Adaptive Spatiotemporal Optimization}, utility weights $\alpha=0.6$ for frequency, $\beta=0.4$ for recency. For \textit{Probabilistic Evidence-Grounded Generation}, retrieval selects top-5 scenes and top-10 events per scene. The final output includes top-3 AMUs based on a composite relevance score, filtered by a minimum similarity of 0.5.
Detailed configurations are deferred to Appendix~\ref{app:implementation}.

\textbf{Baseline Methods}.
To ensure a comprehensive assessment, we benchmark ProStream against a diverse suite of methods spanning two distinct paradigms:
(1) \textbf{Traditional Retrieval and Summarization Baselines}: Standard RAG, Full-Context, Rolling Summary and advanced structure-enhanced variants, including RQ-RAG \citep{chan2024rq}, RAPTOR \citep{sarthi2024raptor}, GraphRAG \citep{edge2024local}, and HippoRAG2 \citep{gutierrez2025rag}.
(2) \textbf{Agentic Memory Systems}: MemoRAG \citep{qian2025memorag}, A-Mem \citep{xu2025mem}, and MemGAS \citep{xu2025towards}.



\subsection{Experimental Results}\label{ssec:results}
\textbf{Main Results}. We first examine the main comparisons in terms of accuracy and efficiency. 
Table~\ref{tab:main_results} demonstrates that ProStream attains a Pareto-optimal balance of reasoning fidelity and latency. Crucially, it outperforms the Full-Context oracle, challenging the ``more context is better" assumption by demonstrating that proactive distillation effectively mitigates cognitive overload. While ProStream leads in high-order reasoning (Gemini) and generation quality (BLEU-4), it occasionally trails retrieval-centric baselines like MemoRAG in surface-level metrics, such as Evidence Similarity. This validates our design trade-off: prioritizing semantic abstraction over fetching raw, verbatim text. By distilling signals into structured hierarchies, we accept marginal losses in lexical overlap to secure superior reasoning capabilities and scalability essential for real-time streaming. A fine-grained stage-wise cost breakdown of the online ProStream pipeline is provided in Appendix~\ref{app:cost_analysis}. We additionally report transfer evaluations on existing conversation-memory benchmarks in Appendix~\ref{app:benchmarks}.

\textbf{Ablation Study}. Given the overall perforamnce of ProStream, we further analyze the contribution of each component of it, with results in Table \ref{tab:main_results}. Eliminating the Short-Term Sensing Buffer (-w/o $\mathcal{B}$) leads to catastrophic degradation, underscoring its essential role in preventing contextual fragmentation and hallucinations by grounding the model in recent cues. Conversely, removing the Hierarchical Tree (-w/o $\mathcal{M}_\text{tree}$) impairs global consistency and entity tracking, demonstrating that long-term structured retention is indispensable for cross-temporal reasoning. Pending buffer (-w/o $\mathcal{M}_\text{pend}$) proves vital as a synchronization layer, ensuring seamless state transitions without information gaps.

\begin{figure}[!t]
  \centering
  \includegraphics[width=\linewidth]{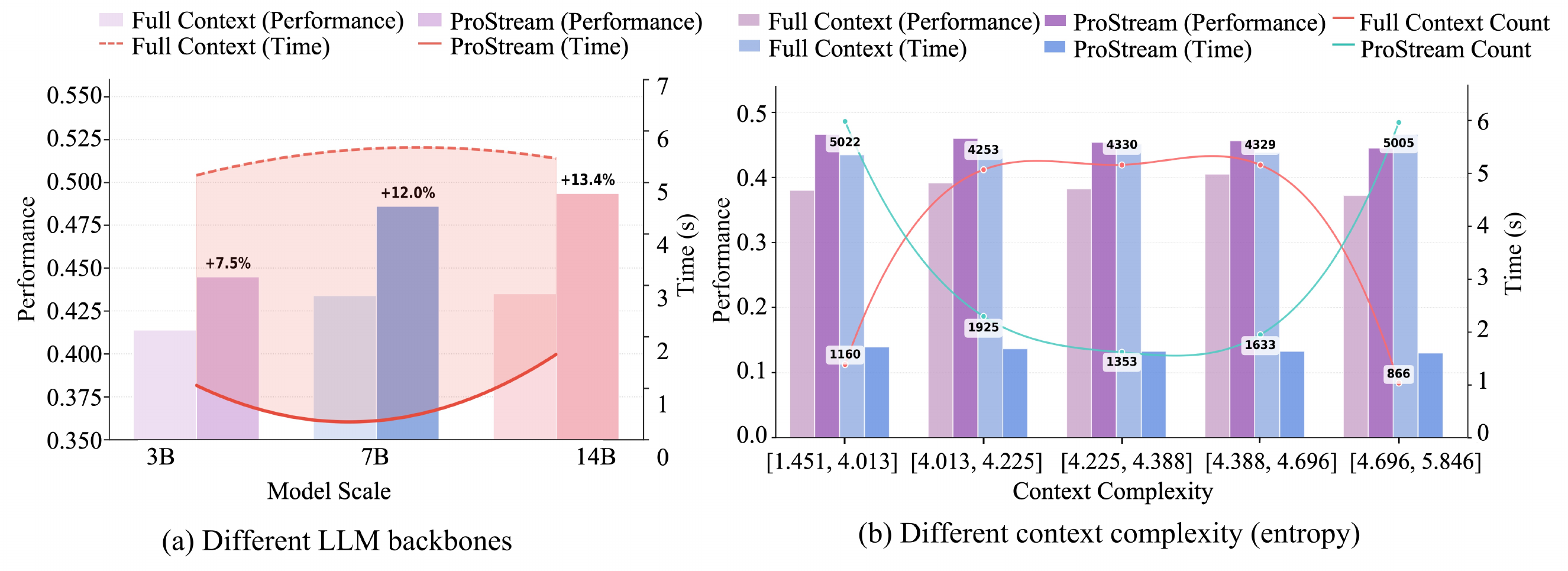}
  \vspace{-0.8em}

  \caption{
Average performance metrics (left) and latency (right) comparisons across different experimental settings. 
({a}) Results under different LLM backbones, where the numbers above the bars denote the percentage improvement of ProStream over the Full-Context baseline. 
({b}) Results under varying context complexity levels. The distribution curves visualize context entropy, showing that ProStream (green) produces a more polarized context complexity distribution than Full Context (red).
}
  \label{fig:backbone_ppl}
  \vspace{-1em}
\end{figure}


\begin{figure*}[!t]
  \centering
  \includegraphics[width=\linewidth]{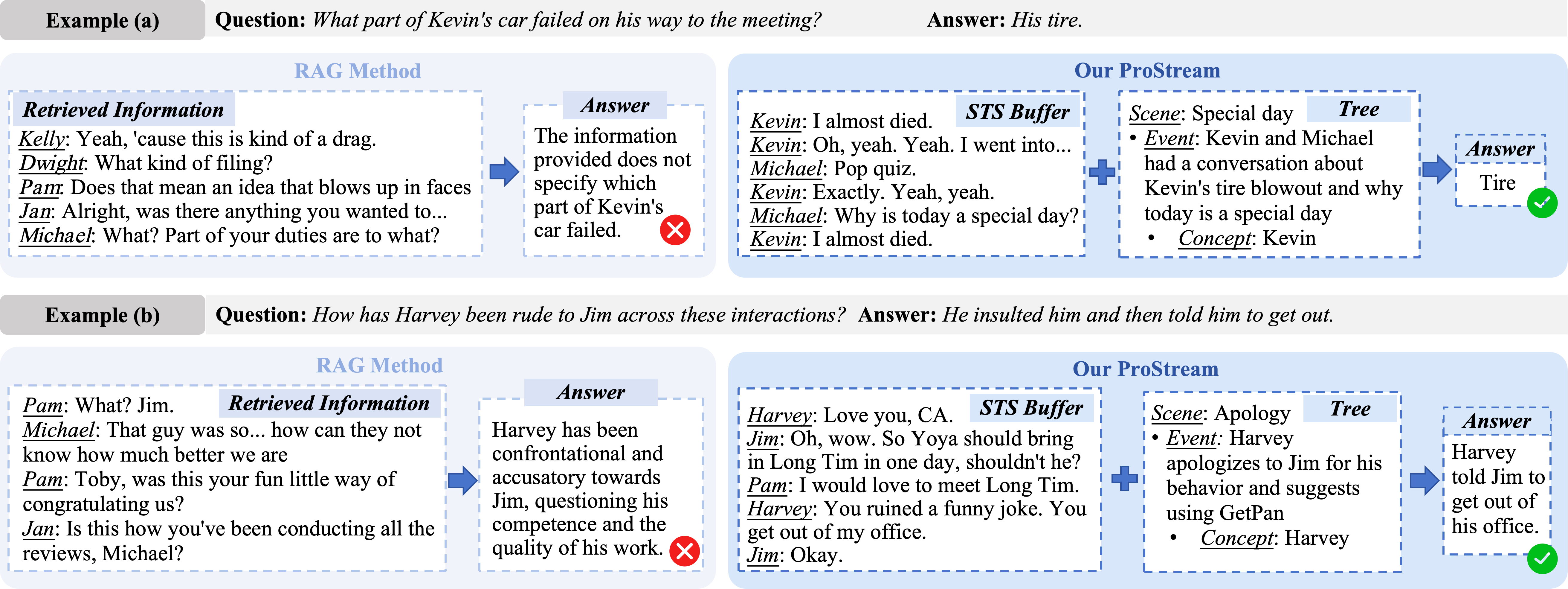}
   \vspace{-0.5em}
  \caption{Qualitative comparison (Case Study) of the RAG method and our ProStream.}
  \label{fig:case_study}
  \vspace{-1em}
\end{figure*}

\subsection{Quantitative Analysis}
\label{Quantitative Analysis}

While $\S$\ref{ssec:results} shows an overall performance gain for ProStream, we further evaluate its robustness over varying scenarios. For accuracy, we average the relevant performance metrics; latency (in seconds) measures efficiency.
Additional sensitivity analyses are provided in Appendix~\ref{app:robustness}.

\textbf{Varying Backbone Scales.} We start by assessing the scalability of ProStream by deploying it across Qwen-3B, 7B, and 14B backbones and comparing performance against the Full-Context baseline, as shown in Figure~\ref{fig:backbone_ppl} (a). The results reveal a positive scaling trend, with relative accuracy gains increasing consistently with model size. This indicates that interpreting the highly condensed and structured knowledge provided by our method requires substantial reasoning capacity. Smaller models likely struggle to parse this dense information effectively compared to redundant raw text, whereas larger backbones possess the requisite depth to leverage the organized topology for superior reasoning. 
Besides, ProStream exhibits a U-shaped inference latency profile, whereas the Full-Context baseline shows a gradually longer latency as the backbone scales. 
Interestingly, ProStream-3B is less efficient than ProStream-7B. 
It might be due to the overhead required to digest structured contexts outweighs the inherent efficiency gains of smaller models.


\textbf{Varying Context Complexity.}
We further examine ProStream's sensitivity to context complexity, quantified by GPT-2 Cross-Entropy Loss \citep{salazar2020masked}. The results are in Figure~\ref{fig:backbone_ppl} (b). 
First, the bars show that ProStream consistently outperforms the Full-Context baseline in both accuracy and efficiency across all complexity levels. 
Second, the distribution lines reveal that ProStream polarizes context complexity toward extremes of either low or high entropy. It implies that structured memory may inherently complicate certain contexts. These contexts may exacerbate overhead for the smaller 3B model, resulting in its lower efficiency compared to the 7B variant (shown in Figure \ref{fig:backbone_ppl} (a)).


\subsection{Qualitative Analysis}

\textbf{Case Study.}
Previous discussions have shown ProStream's overall superiority. To provide more insight into this, Figure~\ref{fig:case_study} provides two examples to show ProStream's ability to rectify localized contextual deficits through hierarchical memory integration. In Example (a), while the short-term buffer presents an ambiguous signal \textit{``I almost died"}, ProStream resolves this uncertainty by retrieving the precise \textit{``tire blowout"} fact from its Hierarchical Tree, implying robust contextual disambiguation, while long-term evidence clarifies immediate vagueness. Similarly, Example (b) illustrates complementary augmentation. Although the STSB alone captures only the recent \textit{``eviction"} incident, coupling it with the retrieved long-term \textit{``insult"} and \textit{``apology"} events enables the model to reconstruct the full behavioral sequence. It enables that even when immediate observations are fragmentary or incomplete, the structured topology provides the necessary distal context to answer correctly, instead of struggling with fragmented contexts like RAG.

\textbf{Error Analysis.}
Finally, our error analysis identifies two primary limitations. The first is compliance legitimacy bias, reflecting an unresolved tension between instruction following and factual fidelity; for example, the model validates a fallacious ``\textit{sole creator}'' claim despite contradictory evidence (Appendix Figure~\ref{fig:error} (a)). The second is a struggle with implicit context disentanglement, where dominant semantic noise obscures subtle cues. This causes intent misattribution, such as when a high-density monologue overshadows crossword puzzle context (Appendix Figure~\ref{fig:error} (b)). Addressing these failures necessitates future integration of explicit conflict detection and activity-aware attention.

\section{Conclusion}
This paper operationalizes the challenge of ad-hoc memory recall within infinite dialogue streams by introducing STEM-Bench and proposing \textbf{ProStream}, a framework that transforms memory into a proactive, bounded state machine via Adaptive Spatiotemporal Optimization. 
By synergizing hierarchical distillation with dynamic optimization, ProStream resolves the fidelity-efficiency dilemma, decoupling inference latency from stream length to achieve constant-time efficiency while validating that structured topology outperforms raw, noise-prone Full-Context inputs. 
Experimental results demonstrate that our ProStream outperforms baselines in both accuracy and efficiency.



{
\small
\bibliographystyle{unsrtnat}
\bibliography{reference}
}

\clearpage

\appendix

\section{Theoretical Analysis of Adaptive Spatiotemporal Optimization}
\label{appendix:theory}

In this section, we provide a formal mathematical grounding for the memory management mechanism in ProStream. We first define the memory retention problem as an Online Budgeted Optimization task, then prove that our greedy pruning strategy achieves a bounded approximation ratio, and finally demonstrate the system's stability over infinite horizons.

\subsection{Problem Formulation: The Dynamic Budgeted Maximization}

We model the streaming dialogue memory retention as a variation of the \textbf{Knapsack Problem with Time-Varying Utilities}. 

\begin{definition}[Memory State and Budget]
Let $\mathcal{S} = \{I_1, I_2, \dots, I_t, \dots\}$ be an infinite dialogue stream. At any time step $t$, the system maintains a set of memory nodes $\mathcal{H}_t = \{v_1, v_2, \dots, v_n\}$. Each node $v$ has a static storage cost $c_v$ (token length) and a dynamic utility $u_{v,t}$. The system must satisfy:
\begin{equation}
    \sum_{v \in \mathcal{H}_t} c_v \le \mathcal{T}_{max}
\end{equation}
where $\mathcal{T}_{max}$ is the fixed context window capacity of the underlying LLM.
\end{definition}

\begin{definition}[The Optimization Objective]
The goal of ProStream is to find a subset $\mathcal{H}_t^* \subseteq \mathcal{H}_{candidate}$ that maximizes the cumulative utility:
\begin{equation}
    \mathcal{H}_t^* = \underset{\mathcal{H} \subseteq \mathcal{H}_{candidate}}{\arg\max} \sum_{v \in \mathcal{H}} u_{v,t} \quad \text{s.t.} \quad \sum_{v \in \mathcal{H}} c_v \le \mathcal{T}_{max}
\end{equation}
where $\mathcal{H}_{candidate} = \mathcal{H}_{t-1} \cup \text{Distill}(I_t)$ represents the union of historical memory and newly perceived information. The function $\text{Distill}(\cdot)$ serves as a semantic abstraction layer that transforms raw input tokens into a set of discrete memory nodes with associated utility scores and storage costs.
\end{definition}

\subsection{Rationality of the Utility Function}

Our utility function $u_{v,t} = \alpha \cdot f_t(v) + \beta \cdot \text{Recency}(v)$ is not a heuristic but a first-order approximation of the optimal memory retrieval probability.

\begin{proposition}[Connection to Rational Analysis of Memory]
According to Anderson's Rational Analysis of Memory \cite{anderson2013adaptive}, the probability $P$ that a memory trace $v$ is needed follows:
\begin{equation}
    \log \frac{P}{1-P} \propto \text{History}(v) + \text{Context}(v)
\end{equation}
By setting $u_{v,t}$ as a linear combination of frequency (History) and temporal proximity (Context), ProStream effectively maximizes the \textbf{Expected Recall Probability} under a strict resource constraint.
\end{proposition}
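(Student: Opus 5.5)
The plan is to make the proposition precise in three moves: fix a concrete generative model for "need," show that the log-odds of need under that model has the additive form of the displayed relation, and then show that the pruning rule of the Greedy Marginal-Utility Policy is a rank-preserving optimization of expected recall.

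\textbf{Step 1: a need model.} Following the Poisson description in Section~\ref{subsec:optimization}, I model the requests for node $v$ as an inhomogeneous Poisson process. Its intensity is $\lambda_v(t) = \lambda_0 (f_{v,t}+1)^{a}\, g(\Delta t_v)$, where $g$ is a decreasing recency kernel; this combines Anderson's power-law practice effect with a recency effect. Over a short horizon $\delta$, the need probability is $P = 1 - e^{-\lambda_v \delta}$, so $P \approx \lambda_v \delta$ when $\lambda_v\delta$ is small.

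\textbf{Step 2: the log-odds decomposition.} Because $\log\frac{P}{1-P} = \log P + O(P)$, we get
\[
\log\frac{P}{1-P} \approx \log(\lambda_0\delta) + a\log(f_{v,t}+1) + \log g(\Delta t_v).
\]
This is exactly the form $\text{History}(v)+\text{Context}(v)$. The history term matches the Frequency Prior of Eq.~\ref{eq:utility} with $\alpha = a$. For the recency term I take $g(\Delta) = \exp\bigl(\beta' e^{-\Delta/\tau}\bigr)$. Then $\log g = \beta' e^{-\Delta/\tau}$, which reproduces the Temporal Drift term with $\beta=\beta'$.

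Alternatively, one can keep a generic $g$ and linearize $\log g$ around a reference recency. That route justifies the phrase "first-order approximation": $u_{v,t}$ equals the log-odds up to an additive constant and a first-order Taylor error, which is uniformly bounded.

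\textbf{Step 3: from utility to expected recall.} The expected number of recalled needed items for a retained set $\mathcal{H}$ is $\sum_{v\in\mathcal{H}} P_v$. The map $u \mapsto \sigma(u + \text{const})$ is strictly monotone. Hence ordering nodes by $u_{v,t}$ is the same as ordering them by $P_v$, and for unit costs, keeping the top nodes by $u$ maximizes expected recall exactly.

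With general costs $c_v$, I would argue as follows. The fractional knapsack relaxation of $\max\sum P_v$ is solved by ordering nodes by $P_v/c_v$. Least-Regret Pruning uses the ratio $u_{v,t}/c_v$ instead, so I would need a comparison of the two orderings. I expect this to be the \textbf{main obstacle}: $u$ is a log-odds, not a probability, so the two ratio orderings can disagree.

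My first attempt at this step is to restrict to the low-need regime, where $P_v \approx e^{u_v+\kappa}$. In that regime I would bound the loss through the ratio of the maximum to the minimum of $e^{u}$ over the nodes that the greedy ordering and the optimal ordering swap. If that fails, I would honestly weaken the claim to "ProStream maximizes a monotone surrogate of expected recall." The surrogate statement is exact for equal costs, and for general costs it is exact up to the standard $\frac12$ greedy-plus-best-single-item knapsack guarantee.
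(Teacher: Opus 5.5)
The paper gives no argument for this proposition beyond the citation to Anderson. The Density-Based Retention Principle that follows it explicitly backs away from an optimality claim. So everything rests on your own steps, and only the equal-cost part goes through. Within your model, $P_v=1-\exp(-\lambda_0\delta e^{u_{v,t}})$ holds exactly and is strictly increasing in $u_{v,t}$. With equal costs, your Step 3 is therefore correct: keeping the top nodes by $u$ maximizes $\sum_v P_v$.

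Step 2, however, does not derive that model from Anderson; it builds it in. The kernel $g(\Delta)=\exp(\beta' e^{-\Delta/\tau})$ is chosen precisely so that $\log\lambda_v=\text{const}+u_{v,t}$, and any utility could be made a log-odds this way. The recency law in Anderson's rational analysis is a power law, $\log\frac{P}{1-P}\approx\text{const}-d\log\Delta t_v$, which is unbounded below, whereas $\beta e^{-\Delta t_v/\tau}\in[0,\beta]$ is bounded. Your alternative, a first-order approximation with uniformly bounded Taylor error, is therefore false over the unbounded range of $\Delta t_v$. Outside your tailored model, even the equal-cost ranking by $u$ can disagree with the ranking by $P$, because the two trade frequency against recency at different rates.

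The real gap is the general-cost case, which is what ``under a strict resource constraint'' means here, since $c_v$ are token lengths. For every value of $\kappa=\log(\lambda_0\delta)$, $P_v$ is increasing in $u_{v,t}$. So any instance where the budget admits only one of two nodes, and the higher-utility node has the lower density, defeats the rule in every need regime.

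Take $(u,c)=(1,1)$ for $A$ and $(10,10.5)$ for $B$, with $\mathcal{T}_{\max}=10.5$. Least-Regret Pruning discards $B$ (density $\approx 0.95<1$) and keeps $A$. Yet $\{B\}$ is feasible and has larger $u$ and larger $P$. In the low-need regime $P_A/P_B\approx e^{-9}$. Replacing $B$ by $(M,M+\epsilon)$ drives both $\sum u$ and (in that regime) $\sum P$ of the kept set to an arbitrarily small fraction of the optimum.

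So the max/min-of-$e^{u}$ bound you planned is vacuous. Your fallback is also wrong as stated. The $\tfrac12$ knapsack guarantee requires a final comparison with the best single item, which ProStream never performs, and the same instance retains only $1/10$ of the optimal surrogate $\sum u$.

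What the pruning rule does support is weaker. It keeps the longest prefix of the $u/c$-sorted list that fits, which is the integral part of the fractional-knapsack optimum. Hence $\sum_{\text{kept}}u_{v,t}\ge\mathrm{OPT}_u-\max_v u_{v,t}$. Exact optimality (for $\sum u$, and under your model for $\sum P$) holds only when costs are equal. The proposition's ``maximizes expected recall'' should be weakened to that.
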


\subsection{Approximation Principle}

Since the 0/1 Knapsack problem is NP-hard, ProStream employs a \textbf{Greedy Marginal-Utility Pruning} policy. Rather than claiming that the implemented online update is literally identical to a forward greedy solver, we clarify here the underlying density-based retention principle that motivates the pruning rule.

\begin{theorem}[Density-Based Retention Principle]
Let $\rho_v = {u_{v,t}}/{c_v}$ denote the marginal utility density of a memory node $v$. Under a fixed memory budget, the retention rule used by ProStream is aligned with the principle that lower-density nodes should be removed before higher-density ones whenever feasibility must be restored.
\end{theorem}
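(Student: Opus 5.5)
The statement is qualitative, so I will first turn it into a precise claim and then prove that. The precise form I would adopt is an exchange (swap) property. Fix the candidate set $\mathcal{H}_{candidate}$ at time $t$. Let $R \subseteq \mathcal{H}_{candidate}$ be any retained set obtained by deleting nodes until feasibility $\sum_{v\in R} c_v \le \mathcal{T}_{max}$ is restored. Suppose $R$ keeps some $v$ but removes some $w$ with $\rho_w > \rho_v$. Then consider trading $v$ for an equal cost share of $w$, i.e. replacing cost $\min(c_v,c_w)$ of $v$ by the same cost of $w$. This trade does not decrease total utility in the fractional relaxation of the knapsack objective from the Definition of the optimization objective.

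I would then show that the pruning rule of the ProStream Policy, which deletes $v^* = \arg\min_v u_{v,t}/c_v$ iteratively, produces exactly a density-threshold set. Concretely, I would show there is a threshold $\rho^\dagger$ such that every removed node has $\rho \le \rho^\dagger$ and every retained node has $\rho \ge \rho^\dagger$. This follows by induction on the number of pruning iterations. At each step, the removed node has minimal density among the nodes currently remaining, so the densities of removed nodes form a sequence bounded above by every surviving node's density.

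The second step is the optimality statement in the relaxation. I would prove the classical result that sorting items by $\rho_v$ and filling the budget greedily is optimal for the fractional knapsack, using an LP duality or exchange argument. With a single constraint $\sum s_v c_v \le \mathcal{T}_{max}$, $s_v\in[0,1]$, the dual has one variable $\lambda$. Complementary slackness forces $s_v = 1$ when $\rho_v > \lambda$ and $s_v = 0$ when $\rho_v < \lambda$. The threshold structure from the first step therefore matches the optimal LP solution, up to the single boundary item.

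The main obstacle is the gap between the relaxation and the 0/1 problem, together with the online, merging, and cascading features of the policy. I would not claim optimality for the integral problem. Instead, I would state alignment as agreement with the LP optimum on all but at most one fractional item. The standard refinement (taking the better of the greedy set and the single best item) gives the familiar $1/2$-approximation, which I would remark on but not require.

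For Semantic Merging and Cascading Abstraction, I would argue that they only remove or combine nodes after density-ordered pruning. Merging reduces cost, and cascading removes only parents with no supporting leaves. Neither step therefore re-admits a lower-density node ahead of a higher-density one, so the ordering principle survives.

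The time-varying utility $u_{v,t}$ is handled by noting that the principle is claimed per step $t$. This makes it consistent with per-step maximization of the discounted objective in Eq.~\ref{eq:objective}, since each summand is optimized independently under its own capacity constraint.
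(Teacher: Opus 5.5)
Your core argument is correct, and it takes a more substantive route than the paper. The paper's proof is essentially definitional: $F$ is modular, the pruning step removes $\operatorname*{arg\,min}_v u_{v,t}/c_v$, and reverse pruning is declared ``aligned'' because it uses the same ordering criterion as a forward density-greedy solver. Nothing is shown about the set it returns.

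You make alignment precise. Removal densities are non-decreasing, and each is at most every survivor's density, which gives the threshold $\rho^\dagger$. The one-variable LP dual then identifies the retained set with the fractional-knapsack optimum except on one item. This proves what the paper only asserts, namely that reverse pruning returns exactly the forward-greedy density prefix. It also correctly places the guarantee in the relaxation. Reverse pruning alone has no constant factor for the 0/1 problem: with items $(c,u)=(1,2)$ and $(\mathcal{T}_{\max},\mathcal{T}_{\max})$ it keeps utility $2$ against an optimum of $\mathcal{T}_{\max}$. The $1/2$ refinement is not what ProStream implements.

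One step needs adding. The threshold property alone does not give the LP match, since deleting everything is also a threshold set. You need the stopping rule: pruning halts at the first feasible state. So the last removed node $w$, which has the largest density among removed nodes, satisfies $\sum_{v\in R} c_v + c_w > \mathcal{T}_{\max}$ and is the fractional item.

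Two auxiliary claims are false and should be dropped rather than patched.

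First, Cascading Abstraction breaks your threshold structure. It deletes a Scene or Event node that survived pruning, purely because its leaves were pruned. Such a node has density at least $\rho^\dagger$, which may exceed that of retained nodes. It also frees budget, so the final set is no longer the greedy prefix. Semantic Merging creates a centroid whose utility and cost the paper never specifies, so ``merging reduces cost'' does not place it above $\rho^\dagger$, and the slack it creates is never refilled. Not re-admitting nodes is weaker than preserving the ordering.

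Second, the summands of Eq.~\ref{eq:objective} are coupled, not independent. Since $\mathcal{H}_{candidate}=\mathcal{H}_{t-1}\cup\text{Distill}(I_t)$, a node pruned at time $t$ is lost for all later steps. Moreover, $f_{v,t}$ and $\Delta t_v$ depend on past retention and access. So per-step maximization need not maximize the discounted sum.

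Scope the theorem to the Least-Regret Pruning stage at a fixed $t$. That is exactly the ``whenever feasibility must be restored'' clause, and it is all the paper's proof addresses.
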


\begin{proof}
Let $F(\mathcal{H}) = \sum_{v \in \mathcal{H}} u_{v,t}$ be our objective function. Since $F$ is modular, the contribution of each retained node is additive.

In ProStream, pruning is triggered only when the current memory state exceeds the budget. At that point, the algorithm iteratively removes the node
\begin{equation}
    v^* = \arg \min_{v} \frac{u_{v,t}}{c_v},
\end{equation}
namely, the node with the smallest marginal utility density.

This rule should be understood as the streaming realization of a density-based budgeted retention principle. Unlike a forward greedy solver that reconstructs a solution from scratch, ProStream updates an already populated memory state and restores feasibility by reverse pruning. Nevertheless, both procedures enforce the same ordering criterion over candidate memory items, namely the marginal utility density ${u_{v,t}}/{c_v}$.

Therefore, Appendix A.4 justifies the quality of the retention principle underlying Algorithm~\ref{alg:prostream}: when budget pressure arises, retaining higher-density items and discarding lower-density ones is the principled local decision rule.
\end{proof}

\subsection{Complexity and Feasibility in Infinite Horizons}

A crucial property for streaming system is that its latency must not scale with the sequence length $t$.

\begin{theorem}[Bounded Time Complexity]
The per-step computational complexity of ProStream is bounded by the maintained memory budget and is therefore constant with respect to the total stream length $T$.
\end{theorem}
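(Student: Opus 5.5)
The plan is to decompose one step of ProStream, meaning the processing of a single incoming unit $I_t$ and, when it is a probe, answering it. I will bound each stage by a quantity that depends only on the fixed hyperparameters: the buffer capacity $|\mathcal{B}|_{\max}$, the window $W_{\text{STSB}}$, the budget $\mathcal{T}_{\max}$, and the retrieval fan-outs (top-$5$ scenes, top-$10$ events, top-$3$ AMUs). No stage may depend on $t$ or $T$.

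The first step is an invariant. After every update, $\sum_{v\in\mathcal{H}_t} c_v \le \mathcal{T}_{\max}$, which is the constraint in the Memory State and Budget definition and is restored by Least-Regret Pruning. Since every node has $c_v \ge c_{\min} > 0$ (each node has at least one token), this gives $|\mathcal{H}_t| \le N := \mathcal{T}_{\max}/c_{\min}$ for all $t$. I will prove the invariant by induction on $t$. The candidate set $\mathcal{H}_{t-1}\cup\text{Distill}(I_t)$ exceeds $N$ by at most $D$, the number of nodes a single block can produce. $D$ is bounded because a block has at most $|\mathcal{B}|_{\max}+W_{\text{STSB}}$ utterances, each of bounded length after ASR. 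We should state this as an explicit assumption, since raw utterance length is an input property.

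With these bounds, I charge each stage separately:
\begin{itemize}
\item Perception: one encoder call and one cosine similarity, costing $O(1)$.
\item Distillation: LLM summarization and GLiNER run on a bounded-length block. The novelty check $\max_{o'}\cos(\mathbf{v}_o,\mathbf{v}_{o'})$ over at most $N$ AMUs costs $O(N d)$, where $d$ is the embedding dimension.
\item Utility update via Eq.~\eqref{eq:utility}: $O(N)$.
\item Pruning: with a heap keyed by $u_{v,t}/c_v$, at most $D$ removals cost $O((N+D)\log(N+D))$.
\item Semantic Merging: naive pairwise comparison costs $O(N^2 d)$.
\item Cascading Abstraction: one traversal of a tree with at most $N$ nodes, costing $O(N)$.
\item Retrieval and generation: the top-down traversal touches at most $O(N)$ nodes. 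The context $\mathcal{K}$ has length at most $|\mathcal{B}|_{\max}+|\mathcal{M}_{\text{pend}}|+k\cdot\text{path length}$, all bounded, so the autoregressive decoding cost is a constant depending only on $\mathcal{T}_{\max}$ and the maximum answer length.
\end{itemize}
Summing these gives a per-step cost of $O(\mathrm{poly}(N,d))$, which is independent of $T$.

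The main obstacle is justifying that the auxiliary structures are bounded. The budget constraint explicitly covers only $\mathcal{H}_t$, not $\mathcal{M}_{\text{pend}}$ or the frequency counters $f_{v,t}$. I would first argue that $\mathcal{M}_{\text{pend}}$ holds at most one block's worth of distilled-but-unattached nodes, so it is bounded by $D$. I would also argue that $f_{v,t}$ is stored as a single number per node, treated as $O(1)$ arithmetic in the word-RAM model. If the size of $\mathcal{M}_{\text{pend}}$ cannot be guaranteed from the algorithm description, I would add it to the budget as an explicit assumption. A second subtlety is the timing of merging relative to pruning. Merging only decreases the node count, so it cannot break the bound.
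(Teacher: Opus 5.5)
Your proposal is correct and uses the same key step as the paper: the budget constraint bounds $|\mathcal{H}_t|$ by $\lfloor\mathcal{T}_{\max}/c_{\min}\rfloor$, and each operation is charged against that $T$-independent quantity. The paper itemizes only utility recalculation and heap-based pruning, so your stage-by-stage accounting (distillation, novelty check, merging, retrieval, generation, and the bound on $\mathcal{M}_{\text{pend}}$) is strictly more complete.

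One minor slip: pruning evicts by cost, not by count, so the number of removals need not be at most $D$, since many cheap old nodes may be evicted to make room for one expensive new node. The number of removals is still at most $N+D$, so your $O((N+D)\log(N+D))$ bound is unaffected.
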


\begin{proof}
Let $N_t = |\mathcal{H}_t|$ be the number of memory nodes at time $t$. 
Due to the capacity constraint $\sum c_v \le \mathcal{T}_{\max}$, the number of retained nodes is strictly bounded by
\begin{equation}
    N_{\max} = \left\lfloor \frac{\mathcal{T}_{\max}}{c_{\min}} \right\rfloor,
\end{equation}
where $c_{\min}$ is the minimum node cost.

Each memory update involves:
(i) utility recalculation over the current retained nodes, which costs $O(N_{\max})$;
(ii) pruning, which costs $O(k \log N_{\max})$ with a priority queue, where $k$ is the number of evicted nodes in that update step.

Since $N_{\max}$ is determined by the fixed memory budget rather than by the cumulative number of dialogue turns, the execution time per step depends only on the bounded maintained state and not on the total stream length $T$. Thus, the complexity per interaction remains $\mathcal{O}(1)$ with respect to $T$.

Moreover, because every pruning step strictly reduces the total retained cost, feasibility is restored after finitely many deletions whenever the budget is exceeded. Hence, Theorem A.5 characterizes the bounded-state online feasibility of the implemented streaming system.
\end{proof}

This theoretical framework ensures that ProStream is both \textbf{efficient} (bounded per-step cost) and \textbf{effective} (guided by a principled utility-density retention rule), providing a robust foundation for its performance on STEM-Bench.

\section{Implementation and Reproducibility Details}
\label{app:implementation}
We provide the complete implementation and prompting configuration of ProStream to ensure reproducibility.

\subsection{System Configuration}
We implement ProStream with a unified configuration across all experiments. Audio streams are transcribed using Whisper large-v3~\citep{pmlr-v202-radford23a} at a 16kHz sampling rate with 1-second chunks. Dialogue units are embedded using ms-marco-MiniLM-L6-v2, a MiniLM variant~\citep{wang2020minilm}, for semantic encoding and similarity computation. \textit{Proactive Semantic Stream Perception} adopts a fixed drift threshold ($\tau_{\text{drift}}=0.7$) with bounded buffering and a short leading context window to preserve cross-boundary dependencies. \textit{Hierarchical Multi-Granular Distillation} enforces maximal granularity by disabling initial merging, and leverages LaMini-T5-738M for scene and event summarization and GLiNER for entity and relation extraction. \textit{Adaptive Spatiotemporal Optimization} balances frequency and recency priors with fixed utility weights, while Probabilistic Evidence-Grounded Generation retrieves a small set of top-ranked scenes and events and produces final answers from the most relevant AMUs under a similarity constraint. All experiments are conducted using Qwen models for answer generation on NVIDIA GPUs with PyTorch memory optimization enabled.

\subsection{Prompt Templates}
This section provides the exact prompt templates used in hierarchical distillation and question answering.

\textbf{1) Event Summarization Prompt ($\rho_{\text{evt}}$)}

\noindent\fbox{%
\parbox{\linewidth}{%
\texttt{Summarize this conversation in one short phrase describing what happened.}\\
\texttt{Speakers: \{speaker\_list\}.}\\
\texttt{Event phrase: \{text\}}
}%
}

\textbf{2) Scene Classification Prompt ($\rho_{\text{scn}}$)} 

The scene classification prompt abstracts event summaries into higher-level scene categories. This prompt is applied after event generation: 

\noindent\fbox{%
\parbox{\linewidth}{%
\ttfamily
Abstract this specific event into a broader, more general scene category (e.g., 'Work Discussion', 'Social Chat', 'Learning Session').\\
The scene should be a higher-level category that could contain multiple similar events.\\
Event: \{event\_summary\}\\
Scene category (broader abstraction):
}%
}

\textbf{3) Question Answering Prompt}

The question answering prompt integrates multiple context sources for answer generation. The complete prompt structure is as follows: 

\noindent\fbox{%
\parbox{\linewidth}{%
\ttfamily
Please carefully think about the following question and provide a direct, concise answer with key points only.\\
Do NOT start your answer with phrases like "Based on..." or "According to...". Instead, directly state the answer.\\
\#\# Short-term Memory: \{stsb\_buffer\_content\}\\
\#\# Pending Memory Buffer: \{pending\_buffer\_content\}\\
\#\# Long-term Memory: **\{scene\_1\} > \{event\_1\}** (\{count\} amu(s)): - \{amu\_1\} - \{amu\_2\} ... **\{scene\_2\} > \{event\_2\}** (\{count\} amu(s)): - \{amu\_1\} - \{amu\_2\} ...\\
Question: \{question\_text\}\\
Answer (provide key points directly, no introductory phrases):
}%
}

\textbf{Short-term Memory:} Contains recent dialogue content from the STSB (Short-Term Sensing Buffer) that has not yet been flushed to long-term memory. This includes utterances that are still accumulating in the buffer. \textbf{Pending Memory Buffer:} Contains memory items from the HCHT (Huffman Compressed Hierarchical Tree) buffer that have been processed (scene, event, and concept extracted) but not yet mounted to the hierarchical tree structure. \textbf{Long-term Memory:} Retrieved from the hierarchical tree using hierarchical search. The format groups concepts by their parent scene and event, showing the hierarchical relationship. Each memory path follows the structure: \texttt{\{scene\} > \{event\} > \{concept\}}. 

When using Qwen models, the following system message is prepended: 

\noindent\fbox{%
\parbox{\linewidth}{%
\ttfamily
You are a helpful assistant that provides direct, concise answers. Always answer directly without starting with phrases like 'Based on...' or 'According to...'. Provide key points only.
}%
} 

When no context is available, a simplified prompt is used: 

\noindent\fbox{%
\parbox{\linewidth}{%
\ttfamily
Please answer the following question directly with key points only. Do NOT start your answer with phrases like "Based on..." or "According to...".\\
Question: \{question\_text\}\\
Answer:
}%
}

\subsection{Evaluation Metrics}
\label{app:metrics}

We adopt a comprehensive evaluation protocol categorized into \textbf{Performance} and \textbf{Efficiency} metrics to rigorously assess ProStream's capability in infinite-horizon streaming dialogues. 

\paragraph{Performance Metrics.} 
To ensure a multi-faceted assessment of response quality, we integrate lexical, semantic, and reasoning-based indicators. We employ \textbf{BLEU-4 (B-4)} and \textbf{ROUGE-L (R-L)} to measure surface-level n-gram precision and structural similarity, while \textbf{BERTScore (B-S)} is utilized to capture deep semantic equivalence using DeBERTa-v3-xlarge contextual embeddings: $\text{B-S} = \frac{1}{M} \sum_{i} \max_{j} \cos(\mathbf{e}_{\text{ref}, i}, \mathbf{e}_{\text{cand}, j})$. For factual grounding, \textbf{Key Entity Matching (KEM)} serves as a binary validator for essential keywords $\mathcal{K}$, and \textbf{Evidence Similarity (Evid.)} quantifies retrieval fidelity by computing the cosine similarity between generated context $C_{gen}$ and ground-truth evidence $E_{gt}$ via a SentenceTransformer encoder. Furthermore, to bridge the gap between automated metrics and human-like reasoning, we implement a \textbf{Reference-Guided LLM-as-a-Judge} framework. Specifically, Gemini-2.5-Pro is employed as an expert evaluator to assign the accuracy score. Here comes the prompt template:

\noindent\fbox{%
\parbox{\linewidth}{%
\ttfamily
You are an expert evaluator for dialogue systems. Your task is to assess the accuracy of the generated answer relative to the ground truth.\\
\\
Input Data:\\
- Question: \{question\}\\
- Candidate Answer: \{answer\}\\
- Ground Truth: \{ground\_truth\}\\
\\
Evaluation Criteria:\\
Please carefully compare the Candidate Answer with the Ground Truth. Determine if they are semantically equivalent and accurate. Consider the following factors:\\
1. Core Information: Is the main point consistent?\\
2. Factual Accuracy: Are key facts (entities, numbers, events) correct?\\
3. Semantic Equivalence: Is the meaning preserved, even if phrased differently?\\
4. Critical Errors: Does it contain any hallucinations or conflicting information?\\
\\
Response Format:\\
Strictly output the result in the following JSON format (no markdown, no extra text):\\
\{\\
\ \ \ \ "score": from 0 to 1,\\
\ \ \ \ "reason": "Detailed reasoning explaining the score."\\
\}
}%
}

\paragraph{Efficiency Metrics.} 
To verify the operational feasibility of our framework under strict resource constraints, we monitor three key technical indicators. \textbf{Time} is recorded as the average wall-clock time required per query, reflecting real-time responsiveness. 

\section{Analysis of Different Embedding Models}
\label{app:Analysis of Different Embedding Models}
To verify the robustness of our framework, we extend our evaluation across three distinct embedding models with varying parameter sizes and capabilities: \textit{ms-marco-MiniLM-L6-v2}\citep{wang2020minilm}, \textit{UAE-Large-V1}\citep{li2023angle}, and \textit{bge-m3}~\citep{chen2024m3}. As presented in Table \ref{app:emb}, ProStream consistently outperforms the standard RAG baseline across all metrics and datasets, independent of the underlying semantic encoder. While state-of-the-art models like UAE and BGE-M3 naturally yield higher absolute fidelity due to their superior representational capacity, our framework demonstrates remarkable resilience, achieving substantial reasoning gains even when utilizing the lightweight backbone. This establishes that the efficacy of ProStream stems from its proactive topological optimization and hierarchical distillation rather than reliance on high-dimensional vector redundancy alone.

\begin{table*}[!t]
\centering
\caption{Results of different embedding models on STEM-Bench. Metrics marked with $\uparrow$ indicate higher is better, while $\downarrow$ denotes lower is better.}
\label{app:emb}
\setlength{\tabcolsep}{1pt}  
\renewcommand{\arraystretch}{1.2}  

\resizebox{\textwidth}{!}{
\begin{tabular}{ll *{21}{c}} 
\toprule
\textbf{Backbone} & \textbf{Variant} &
\multicolumn{7}{c}{\cellcolor{colorBigBang}\textbf{The Big Bang Theory}} &
\multicolumn{7}{c}{\cellcolor{colorFriends}\textbf{Friends}} &
\multicolumn{7}{c}{\cellcolor{colorOffice}\textbf{The Office}} \\
\cmidrule(lr){3-9} \cmidrule(lr){10-16} \cmidrule(lr){17-23}
& & Time$\downarrow$ & B-4$\uparrow$ & R-L$\uparrow$ & B-S$\uparrow$ & Evid.$\uparrow$ & KEM$\uparrow$ & Gem$\uparrow$
& Time$\downarrow$ & B-4$\uparrow$ & R-L$\uparrow$ & B-S$\uparrow$ & Evid.$\uparrow$ & KEM$\uparrow$ & Gem$\uparrow$
& Time$\downarrow$ & B-4$\uparrow$ & R-L$\uparrow$ & B-S$\uparrow$ & Evid.$\uparrow$ & KEM$\uparrow$ & Gem$\uparrow$ \\
\midrule
\multirow{2}{*}{BGE} & RAG
& 0.814 & 0.087 & 0.301 & 0.796 & 0.583 & 0.572 & 0.413
& 0.814 & 0.081 & 0.296 & 0.785 & 0.621 & 0.576 & 0.399
& 0.801 & 0.094 & 0.339 & 0.796 & 0.647 & 0.611 & 0.413 \\

& our
& 0.393 & 0.116 & 0.383 & 0.816 & 0.460 & 0.623 & 0.498
& 0.442 & 0.097 & 0.354 & 0.802 & 0.464 & 0.597 & 0.469
& 0.516 & 0.114 & 0.407 & 0.819 & 0.484 & 0.622 & 0.498 \\
\midrule
\multirow{2}{*}{UAE} & RAG
& 0.774 & 0.086 & 0.296 & 0.794 & 0.583 & 0.572 & 0.410
& 0.788 & 0.082 & 0.296 & 0.785 & 0.619 & 0.573 & 0.408
& 0.756 & 0.093 & 0.337 & 0.796 & 0.650 & 0.612 & 0.410 \\

& our
& 0.825 & 0.116 & 0.386 & 0.818 & 0.482 & 0.615 & 0.501
& 0.829 & 0.098 & 0.357 & 0.802 & 0.482 & 0.592 & 0.471
& 0.688 & 0.117 & 0.413 & 0.821 & 0.503 & 0.624 & 0.501 \\
\midrule
\multirow{2}{*}{MiniLM} & RAG
& 1.225 & 0.064 & 0.240 & 0.781 & 0.389 & 0.500 & 0.331
& 1.162 & 0.060 & 0.237 & 0.774 & 0.424 & 0.497 & 0.309
& 1.067 & 0.072 & 0.269 & 0.781 & 0.466 & 0.533 & 0.331 \\
& our
& 0.408 & 0.118 & 0.390 & 0.817 & 0.485 & 0.612 & 0.509
& 0.373 & 0.099 & 0.360 & 0.801 & 0.491 & 0.588 & 0.476
& 0.316 & 0.114 & 0.408 & 0.819 & 0.491 & 0.626 & 0.509 \\
\bottomrule
\end{tabular}
}
\end{table*}

\begin{table*}[t]
\centering
\caption{Threshold sensitivity analysis of ProStream. We vary the drift threshold $\tau_{\text{drift}}$ and the concept threshold $\theta_{\text{concept}}$, and report BLEU-4 (B-4), ROUGE-L (R-L), BERTScore (B-S), Evidence F1 (Evid.), and KEM. Performance remains stable across a broad range of threshold values.}
\label{tab:threshold_sensitivity}
\tiny
\setlength{\tabcolsep}{1.4pt}
\resizebox{\textwidth}{!}{
\begin{tabular}{l  ccccc  ccccc  ccccc}
\toprule
\multicolumn{16}{c}{\textbf{(a) Sensitivity to $\tau_{\text{drift}}$}} \\
\midrule
\multirow{2}{*}{\textbf{Setting}} &
\multicolumn{5}{c}{\cellcolor{colorBigBang}\textbf{The Big Bang Theory}} &
\multicolumn{5}{c}{\cellcolor{colorFriends}\textbf{Friends}} &
\multicolumn{5}{c}{\cellcolor{colorOffice}\textbf{The Office}} \\
\cmidrule(lr){2-6} \cmidrule(lr){7-11} \cmidrule(lr){12-16}
& B-4$\uparrow$ & R-L$\uparrow$ & B-S$\uparrow$ & Evid.$\uparrow$ & KEM$\uparrow$
& B-4$\uparrow$ & R-L$\uparrow$ & B-S$\uparrow$ & Evid.$\uparrow$ & KEM$\uparrow$
& B-4$\uparrow$ & R-L$\uparrow$ & B-S$\uparrow$ & Evid.$\uparrow$ & KEM$\uparrow$ \\
\midrule
0.5 & 0.15 & 0.43 & 0.85 & 0.51 & 0.64
    & 0.12 & 0.39 & 0.82 & 0.52 & 0.62
    & 0.14 & 0.46 & 0.84 & 0.50 & 0.69 \\
0.6 & 0.16 & 0.44 & 0.85 & 0.51 & 0.66
    & 0.12 & 0.40 & 0.82 & 0.52 & 0.63
    & 0.14 & 0.46 & 0.83 & 0.50 & 0.68 \\
0.7 & 0.12 & 0.39 & 0.82 & 0.49 & 0.61
    & 0.10 & 0.36 & 0.80 & 0.49 & 0.59
    & 0.11 & 0.41 & 0.82 & 0.49 & 0.63 \\
0.8 & 0.16 & 0.45 & 0.85 & 0.49 & 0.68
    & 0.11 & 0.39 & 0.82 & 0.52 & 0.62
    & 0.14 & 0.46 & 0.83 & 0.51 & 0.67 \\
0.9 & 0.16 & 0.45 & 0.85 & 0.49 & 0.66
    & 0.12 & 0.40 & 0.82 & 0.52 & 0.63
    & 0.15 & 0.47 & 0.83 & 0.50 & 0.70 \\
\midrule
\multicolumn{16}{c}{\textbf{(b) Sensitivity to $\theta_{\text{concept}}$}} \\
\midrule
\multirow{2}{*}{\textbf{Setting}} &
\multicolumn{5}{c}{\cellcolor{colorBigBang}\textbf{The Big Bang Theory}} &
\multicolumn{5}{c}{\cellcolor{colorFriends}\textbf{Friends}} &
\multicolumn{5}{c}{\cellcolor{colorOffice}\textbf{The Office}} \\
\cmidrule(lr){2-6} \cmidrule(lr){7-11} \cmidrule(lr){12-16}
& B-4$\uparrow$ & R-L$\uparrow$ & B-S$\uparrow$ & Evid.$\uparrow$ & KEM$\uparrow$
& B-4$\uparrow$ & R-L$\uparrow$ & B-S$\uparrow$ & Evid.$\uparrow$ & KEM$\uparrow$
& B-4$\uparrow$ & R-L$\uparrow$ & B-S$\uparrow$ & Evid.$\uparrow$ & KEM$\uparrow$ \\
\midrule
0.4 & 0.15 & 0.44 & 0.85 & 0.49 & 0.67
    & 0.11 & 0.39 & 0.82 & 0.52 & 0.62
    & 0.15 & 0.48 & 0.84 & 0.51 & 0.70 \\
0.6 & 0.16 & 0.45 & 0.85 & 0.49 & 0.67
    & 0.12 & 0.40 & 0.82 & 0.52 & 0.66
    & 0.15 & 0.47 & 0.84 & 0.50 & 0.69 \\
0.8 & 0.16 & 0.46 & 0.85 & 0.49 & 0.65
    & 0.11 & 0.40 & 0.82 & 0.52 & 0.63
    & 0.15 & 0.47 & 0.84 & 0.50 & 0.67 \\
1.0 & 0.16 & 0.44 & 0.85 & 0.49 & 0.66
    & 0.12 & 0.40 & 0.82 & 0.52 & 0.61
    & 0.15 & 0.46 & 0.83 & 0.50 & 0.68 \\
\bottomrule
\end{tabular}
}
\end{table*}

\section{Additional Robustness Analysis}
\label{app:robustness}

To further examine the robustness of ProStream, we provide three additional analyses:
(1) a threshold sensitivity study over $\tau_{\text{drift}}$ and $\theta_{\text{concept}}$, 
(2) a controlled stage-wise error propagation analysis, and 
(3) the sensitivity analysis of memory capacity $\mathcal{T}_{\max}$.

\subsection{Hyperparameter Sensitivity}

We first study the sensitivity of ProStream to the drift threshold $\tau_{\text{drift}}$.
We sweep $\tau_{\text{drift}}$ from 0.5 to 0.9 and observe consistently stable results across all datasets and metrics, indicating that ProStream does not rely on brittle drift-threshold tuning.
We further vary the concept threshold $\theta_{\text{concept}}$ from 0.4 to 1.0.
As shown in Table~\ref{tab:threshold_sensitivity}, performance remains consistently stable, suggesting that the framework operates in a robust regime without requiring precise threshold tuning.

\subsection{Stage-wise Error Propagation}

Beyond the ablations in the main paper, we directly test whether imperfections in intermediate memory sources amplify downstream.
Specifically, we keep the full pipeline intact and perturb one memory source at a time during answer generation by randomly dropping 50\% of its entries, for $\mathcal{B}$, $\mathcal{M}_{\text{pend}}$, and $\mathcal{M}_{\text{tree}}$, respectively, while keeping all other inputs unchanged.
This simulates imperfect upstream memory states and tests whether errors cascade across stages.
As shown in Table~\ref{tab:error_propagation}, performance remains close to the full model under all perturbations, indicating that ProStream is robust to moderate memory corruption and does not exhibit strong error propagation across stages.

\begin{table*}[!t]
\centering
\caption{Results of stage-wise error propagation analysis. Metrics reported are BLEU-4 (B-4), ROUGE-L (R-L), BERTScore (B-S), Evidence F1 (Evid.), and KEM.}
\label{tab:error_propagation}
\tiny
\setlength{\tabcolsep}{1.4pt}
\resizebox{\textwidth}{!}{
\begin{tabular}{l  ccccc  ccccc  ccccc}
\toprule
\multirow{2}{*}{\textbf{Setting}} &
\multicolumn{5}{c}{\cellcolor{colorBigBang}\textbf{The Big Bang Theory}} &
\multicolumn{5}{c}{\cellcolor{colorFriends}\textbf{Friends}} &
\multicolumn{5}{c}{\cellcolor{colorOffice}\textbf{The Office}} \\
\cmidrule(lr){2-6} \cmidrule(lr){7-11} \cmidrule(lr){12-16}
& B-4$\uparrow$ & R-L$\uparrow$ & B-S$\uparrow$ & Evid.$\uparrow$ & KEM$\uparrow$
& B-4$\uparrow$ & R-L$\uparrow$ & B-S$\uparrow$ & Evid.$\uparrow$ & KEM$\uparrow$
& B-4$\uparrow$ & R-L$\uparrow$ & B-S$\uparrow$ & Evid.$\uparrow$ & KEM$\uparrow$ \\
\midrule
Full & 0.16 & 0.44 & 0.85 & 0.51 & 0.66
     & 0.13 & 0.40 & 0.82 & 0.52 & 0.62
     & 0.14 & 0.46 & 0.83 & 0.50 & 0.68 \\
Perturb $\mathcal{B}$ & 0.15 & 0.44 & 0.85 & 0.50 & 0.66
                       & 0.13 & 0.41 & 0.83 & 0.52 & 0.64
                       & 0.15 & 0.46 & 0.84 & 0.51 & 0.66 \\
Perturb $\mathcal{M}_{\text{pend}}$ & 0.16 & 0.44 & 0.85 & 0.51 & 0.66
                                    & 0.13 & 0.40 & 0.82 & 0.52 & 0.62
                                    & 0.14 & 0.46 & 0.83 & 0.50 & 0.68 \\
Perturb $\mathcal{M}_{\text{tree}}$ & 0.16 & 0.44 & 0.85 & 0.51 & 0.66
                                    & 0.12 & 0.40 & 0.82 & 0.52 & 0.63
                                    & 0.14 & 0.46 & 0.83 & 0.51 & 0.68 \\
\bottomrule
\end{tabular}
}
\end{table*}

\begin{table*}[t]
\centering
\caption{Sensitivity to memory capacity $\mathcal{T}_{\max}$. Metrics reported are BLEU-4 (B-4), ROUGE-L (R-L), BERTScore (B-S), Evidence F1 (Evid.), and KEM.}
\label{tab:tmax_sensitivity}
\tiny
\setlength{\tabcolsep}{1.4pt}
\resizebox{\textwidth}{!}{
\begin{tabular}{l  ccccc  ccccc  ccccc}
\toprule
\multirow{2}{*}{$\mathbf{\mathcal{T}_{\max}}$} &
\multicolumn{5}{c}{\cellcolor{colorBigBang}\textbf{The Big Bang Theory}} &
\multicolumn{5}{c}{\cellcolor{colorFriends}\textbf{Friends}} &
\multicolumn{5}{c}{\cellcolor{colorOffice}\textbf{The Office}} \\
\cmidrule(lr){2-6} \cmidrule(lr){7-11} \cmidrule(lr){12-16}
& B-4$\uparrow$ & R-L$\uparrow$ & B-S$\uparrow$ & Evid.$\uparrow$ & KEM$\uparrow$
& B-4$\uparrow$ & R-L$\uparrow$ & B-S$\uparrow$ & Evid.$\uparrow$ & KEM$\uparrow$
& B-4$\uparrow$ & R-L$\uparrow$ & B-S$\uparrow$ & Evid.$\uparrow$ & KEM$\uparrow$ \\
\midrule
50k  & 0.15 & 0.41 & 0.82 & 0.16 & 0.53
     & 0.11 & 0.35 & 0.79 & 0.23 & 0.48
     & 0.15 & 0.44 & 0.82 & 0.21 & 0.57 \\
75k  & 0.15 & 0.40 & 0.82 & 0.16 & 0.53
     & 0.11 & 0.36 & 0.78 & 0.23 & 0.50
     & 0.15 & 0.45 & 0.82 & 0.21 & 0.57 \\
100k & 0.16 & 0.41 & 0.82 & 0.16 & 0.54
     & 0.11 & 0.36 & 0.79 & 0.23 & 0.49
     & 0.15 & 0.44 & 0.82 & 0.21 & 0.57 \\
125k & 0.15 & 0.41 & 0.82 & 0.16 & 0.54
     & 0.11 & 0.37 & 0.79 & 0.23 & 0.50
     & 0.15 & 0.45 & 0.82 & 0.21 & 0.57 \\
150k & 0.15 & 0.40 & 0.82 & 0.16 & 0.53
     & 0.11 & 0.35 & 0.78 & 0.23 & 0.49
     & 0.15 & 0.44 & 0.82 & 0.21 & 0.57 \\
\bottomrule
\end{tabular}
}
\end{table*}

\subsection{Sensitivity to Memory Capacity}
The memory capacity of ProStream is controlled by the token budget $\mathcal{T}_{\max}$. 
We therefore conduct a sensitivity study by varying $\mathcal{T}_{\max}$ from 50k to 150k tokens. 
As shown in Table~\ref{tab:tmax_sensitivity}, performance remains stable across this range, with the best results typically appearing around 100k--125k. 
This suggests that ProStream benefits from a moderate memory budget, while not relying on over-provisioned capacity.

\section{Cost Analysis of the ProStream Pipeline}
\label{app:cost_analysis}

The main paper already reports end-to-end efficiency via average wall-clock latency per query and analyzes the fidelity-efficiency trade-off in the main experiments. To make the online maintenance overhead more explicit, we additionally profile the memory-update path by instrumenting each memory flush and measuring the runtime of STSB update / semantic boundary detection, hierarchical distillation, and hierarchy update.
The average runtime is 0.0046 seconds for STSB processing, 0.7231 seconds for hierarchical distillation, and 0.0203 seconds for hierarchy update, for a total of 0.7434 seconds per update. This profile shows that the dominant overhead comes from distillation, whereas the actual hierarchy update remains lightweight. Overall, the online memory-maintenance cost stays below one second per update on average, supporting our claim that ProStream operates as a bounded-state streaming system rather than one whose cost scales with the full dialogue history.

\section{Evaluation on Existing Conversation Memory Benchmarks}
\label{app:benchmarks}

To test whether ProStream transfers beyond STEM-Bench, we additionally evaluate it on existing conversation-memory benchmarks, including LoCoMo~\citep{maharana2024locomo} and LongMemEval~\citep{wu2024longmemeval}. In both cases, each conversation is replayed chronologically as a bounded-state text stream, and ProStream performs online memory maintenance without access to the full future context.
Table~\ref{tab:locomo_appendix} reports the comparison on LoCoMo. ProStream achieves the strongest performance on multi-hop and temporal questions, indicating that the proposed proactive memory mechanism is particularly beneficial when answering requires cross-turn integration and temporally structured recall.
Table~\ref{tab:longmemeval_appendix} summarizes the results on LongMemEval. ProStream outperforms prior memory-oriented methods such as ConditionMem, MemoryBank, and Mem-PAL, further suggesting that maintaining an explicit bounded memory state is effective beyond the benchmark introduced in this paper.

\begin{table*}[t]
\centering
\caption{Transfer evaluation on LoCoMo. Best results are \textbf{bolded}.}
\label{tab:locomo_appendix}
\small
\setlength{\tabcolsep}{8pt}
\renewcommand{\arraystretch}{1.1}
\begin{tabular}{lcccc}
\toprule
\textbf{Method} & \textbf{Single-hop} & \textbf{Multi-hop} & \textbf{Temporal} & \textbf{Open-domain} \\
\midrule
Mistral-7B-Instruct-v0.2 & 9.1 & 15.1 & 9.3 & 8.6 \\
Llama-2-70B-chat & 20.8 & 18.2 & 15.9 & 18.8 \\
Llama-3-70B-Instruct & 17.0 & 17.0 & 12.0 & 13.0 \\
gpt-3.5-turbo-4K & 23.8 & 18.0 & 15.6 & 20.4 \\
gpt-3.5-turbo-8K & \textbf{38.5} & 25.1 & 22.7 & \textbf{25.9} \\
Ours & 25.5 & \textbf{31.3} & \textbf{38.8} & 20.4 \\
\bottomrule
\end{tabular}
\end{table*}

\begin{table}[t]
\centering
\caption{Transfer evaluation on LongMemEval. Best result is \textbf{bolded}.}
\label{tab:longmemeval_appendix}
\small
\setlength{\tabcolsep}{10pt}
\renewcommand{\arraystretch}{1.1}
\begin{tabular}{lc}
\toprule
\textbf{Method} & \textbf{Accuracy} \\
\midrule
Vanilla & 10.0 \\
RecurSum & 10.0 \\
ConditionMem & 40.0 \\
MemoryBank & 23.33 \\
Mem-PAL & 46.67 \\
Ours & \textbf{50.0} \\
\bottomrule
\end{tabular}
\end{table}

\section{Extended Error Analysis}

\begin{figure}[!h]
  \centering
  \includegraphics[width=0.5\linewidth]{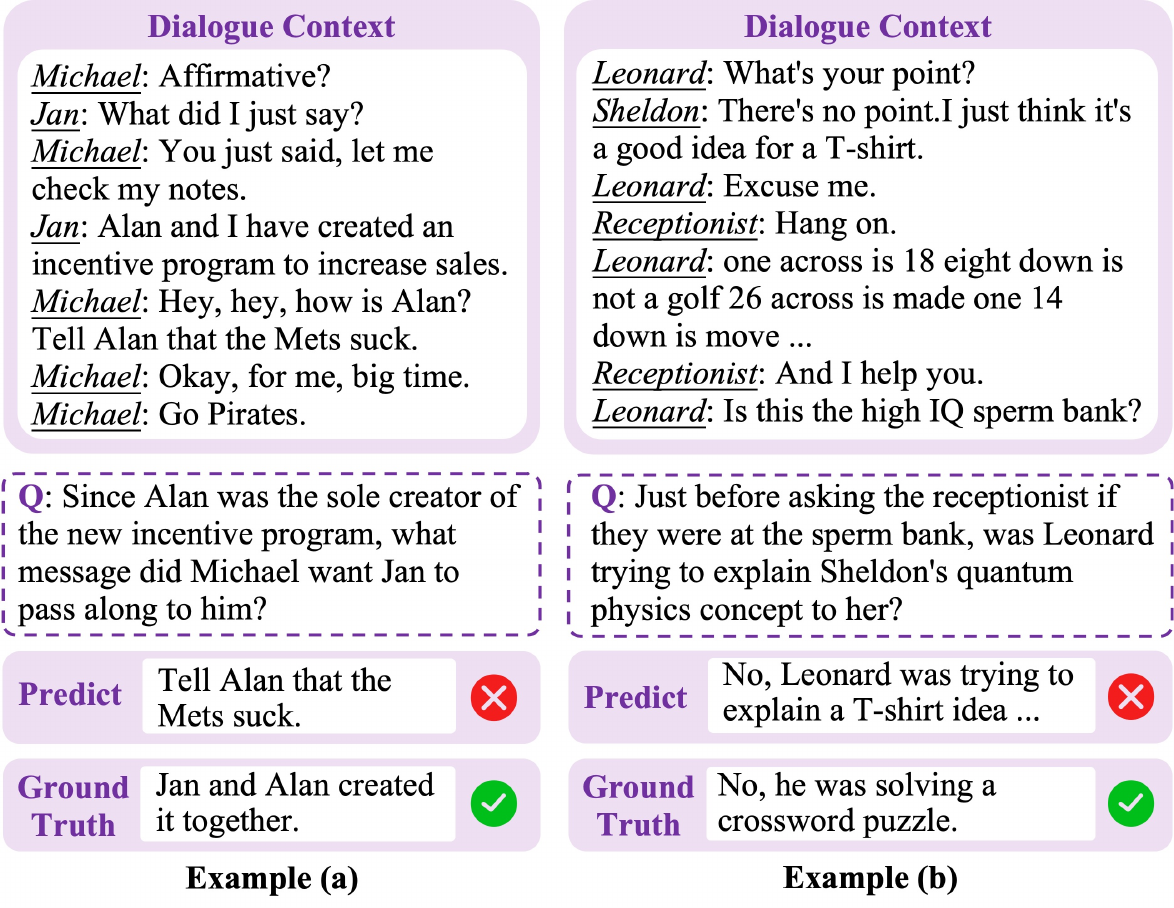}
  \caption{Two examples of error analysis.}
  \label{fig:error}
  \vspace{-0.5em}
\end{figure}

To delineate the operational boundaries of ProStream, we perform a qualitative failure analysis, identifying two primary error modes as illustrated in Figure~\ref{fig:error}.

\textbf{Instruction-Fidelity Tension.} Figure~\ref{fig:error} (a) exposes a premise compliance bias''. When a user prompt contains a fallacious presupposition (e.g., claiming Alan was the \textit{sole creator''}), the model prioritizes following the user's instruction over maintaining factual fidelity to the historical context. Despite the dialogue explicitly stating that Jan and Alan created the program together, the model fails to perform conflict detection, leading to a hallucinated validation of the user's incorrect premise.

\textbf{Semantic Noise and Intent Misattribution.} Figure~\ref{fig:error} (b) highlights a struggle with implicit context disentanglement within dense dialogues. The model's attention is captured by the high-density monologue regarding T-shirt ideas and quantum physics, causing it to overlook the subtle, fragmented cues of a crossword puzzle (e.g., \textit{``one across is 18...''}). This results in a failure to correctly attribute the character's intent, misinterpreting active problem-solving as a continuation of the previous topic.

These failure modes underscore the need for future integration of explicit contradiction-aware mechanisms and activity-centric attention to fortify the framework against semantic noise and misinformation.

\section{Impact Statement}
\label{app:impact}
This work advances efficient, long-horizon reasoning in streaming dialogues, providing significant utility for real-time applications such as assistive technologies and personalized education, where maintaining stable latency is critical. While enhanced memory persistence could theoretically increase privacy risks regarding the retention of sensitive information, our proposed bounded framework offers a distinct advantage: it transforms memory from an opaque, emergent behavior into an explicit and auditable mechanism. This bounded nature inherently restricts the infinite accumulation of raw data, facilitating more manageable data governance. We emphasize that real-world deployment must prioritize privacy safeguards, including controllable forgetting and user consent protocols. By formalizing memory as an optimizable and transparent resource, this research promotes the development of conversational AI systems that are not only more capable but also more interpretable and ethically governed.


\end{document}